\tikzstyle{every pin edge}=[<-,shorten <=1pt]
\tikzstyle{neuron}=[circle,fill=black!25,minimum size=17pt,inner sep=0pt]
\tikzstyle{input neuron}=[neuron, fill=green!50]
\tikzstyle{output neuron}=[neuron, fill=red!50]
\tikzstyle{hidden neuron}=[neuron, fill=blue!50]
\tikzstyle{merged neuron}=[neuron, fill=orange!50]
\tikzstyle{annot} = [text width=6em, text centered]
\tikzstyle{nnedge} = [-{stealth},shorten >=0.1cm, shorten <=0.05cm,line width=0.8pt,black]
\tikzstyle{proofNode} = [rounded rectangle, fill=purple!30]
\tikzstyle{proofEdge} = [-{stealth},shorten >=0.1cm, shorten <=0.05cm,line width=0.8pt,black]
\newcommand{\relu}{\textit{ReLU}\xspace}
\newcommand{\originNetwork}{\ensuremath{N}\xspace}
\newcommand{\PreserveBackslash}[1]{\let\temp=\\#1\let\\=\temp}
\newcommand{\specTree}{\mathcal{T}}
\newcommand{\verifier}{\ensuremath{\mathtt{LpVerifier}}\xspace}
\newcommand{\atomicProp}{\ensuremath{\mathtt{AP}}\xspace}
\newcommand{\argmax}{\mathop{\rm arg~max}\limits}
\newcommand{\tool}{\ensuremath{\mathsf{Oliva}}\xspace}
\newcommand{\toolg}{\ensuremath{\tool^\mathsf{GR}}\xspace}
\newcommand{\toolb}{\ensuremath{\tool^\mathsf{SA}}\xspace}
\newcommand{\R}{\mathbb{R}}
\newcommand{\dnn}{\ensuremath{\originNetwork}\xspace}
\newcommand{\inputSpec}{\Phi}
\newcommand{\outputSpec}{\Psi}
\newcommand{\reluSpec}{\Gamma}
\newcommand{\reluAction}{r}
\newcommand{\reluActionPos}{r^+}
\newcommand{\reluActionNeg}{r^-}
\newcommand{\reluHeuristic}{\mathsf{H}}
\newcommand{\true}{\ensuremath{\mathsf{true}}\xspace}
\newcommand{\false}{\ensuremath{\mathsf{false}}\xspace}
\newcommand{\specDist}{\hat{p}}
\newcommand{\specDistMin}{\hat{p}_{\mathrm{min}}}
\newcommand{\ce}{\hat{\bm{x}}}
\newcommand{\cepo}{\textrm{CePO}\xspace}
\newcommand{\bab}{\ensuremath{\mathtt{BaB}}\xspace}
\newcommand{\reluInput}{\hat{x}}
\newcommand{\reluSet}{\mathsf{Set}}
\newcommand{\queue}{\ensuremath{Q}\xspace}
\newcolumntype{C}[1]{>{\PreserveBackslash\centering}p{#1}}
\newcommand{\partialOrder}{\sqsubset}
\newcommand{\reward}{\ensuremath{\mathsf{R}}\xspace}
\newcommand{\seman}[1]{\ensuremath{\mathsf{susp}(#1)}\xspace}
\newcommand{\abcrown}{{$\alpha\beta$-\textsf{Crown}}\xspace}
\newcommand{\marabou}{{\textsf{Marabou}}\xspace}
\newcommand{\neuralsat}{{\textsf{NeuralSAT}}\xspace}
\newcommand{\eran}{{\textsf{ERAN}}\xspace}
\newcommand{\fsb}{{\textsf{FSB}}\xspace}
\definecolor{mygreen}{RGB}{200, 255, 200}
\newcommand{\tbgreen}{\cellcolor{mygreen}}
\definecolor{myred}{RGB}{255, 200, 200}
\newcommand{\tbred}{\cellcolor{myred}}
\newcommand{\mnist}{\texttt{MNIST}\xspace}
\newcommand{\cifar}{\texttt{CIFAR-10}\xspace}
\newcommand{\OVAL}{\texttt{OVAL21}\xspace}
\newcommand{\base}{\texttt{BASE}}
\newcommand{\deep}{\texttt{DEEP}}
\newcommand{\wide}{\texttt{WIDE}}
\newcommand{\lfour}{\texttt{L4}}
\newcommand{\ltwo}{\texttt{L2}}
\newcommand{\descitem}[1][Item]{\item[#1~---]}%
\newcounter{mydefinition}
\newcounter{myexample}
\newcounter{mylemma}
\newcounter{myproposition}
\newcounter{myassumption}
\newcounter{myremark}
\title{Efficient Neural Network Verification via Order Leading Exploration of Branch-and-Bound Trees} 
\titlerunning{Efficient Neural Network Verification via Order Leading Exploration of \bab Trees} 
\author{Guanqin Zhang}{University of New South Wales, Sydney, Australia \and CSIRO's Data61, Sydney, Australia}{Guanqin.zhang@unsw.edu.au}{0000-0002-3844-8180}{CSIRO' Data61 PhD Scholarship}
\author{Kota Fukuda}{Kyushu University, Fukuoka, Japan}{fukuda.kota.527@s.kyushu-u.ac.jp}{0009-0000-0292-0896}{JST BOOST Grant No. JPMJBS2406}
\author{Zhenya Zhang}{Kyushu University, Fukuoka, Japan \and National Institute of Informatics, Tokyo, Japan}{zhang@ait.kyushu-u.ac.jp}{0000-0002-3854-9846}{JST BOOST Grant No. JPMJBY24D7 and JSPS Grant No. JP25K21179}
\author{H.M.N. Dilum Bandara}{CSIRO's Data61, Sydney, Australia \and University of New South Wales, Sydney, Australia}{dilum.bandara@data61.csiro.au}{0000-0002-2927-5628}{}
\author{Shiping Chen}{CSIRO's Data61, Sydney, Australia \and University of New South Wales, Sydney, Australia}{shiping.chen@data61.csiro.au}{0000-0002-4603-0024}{}
\author{Jianjun Zhao}{Kyushu University, Fukuoka, Japan}{zhao@ait.kyushu-u.ac.jp}{0000-0001-8083-4352}{JSPS KAKENHI Grant No. JP23H0337}
\author{Yulei Sui}{University of New South Wales, Sydney, Australia }{y.sui@unsw.edu.au}{0000-0002-9510-6574}{Australian Research Council Grants No. FT220100391 and No. DP250101396.}
\authorrunning{G. Zhang et al.} 
\keywords{neural network verification, branch and bound, counterexample potentiality, simulated annealing, stochastic optimization} 
\begin{document}

\maketitle

\begin{abstract}
The vulnerability of neural networks to adversarial perturbations has necessitated formal verification techniques that can rigorously certify the quality of neural networks. As the state-of-the-art, branch-and-bound (\bab) is a ``divide-and-conquer'' strategy that applies off-the-shelf verifiers to sub-problems for which they perform better. While \bab can identify the sub-problems that are necessary to be split, it explores the space of these sub-problems in a naive ``first-come-first-served'' manner, thereby suffering from an issue of inefficiency to reach a verification conclusion.
To bridge this gap, we introduce an order over different sub-problems produced by \bab, concerning with their different likelihoods of containing counterexamples. Based on this order, we propose a novel verification framework \tool that explores the sub-problem space by prioritizing those sub-problems that are more likely to find counterexamples, in order to efficiently reach the conclusion of the verification. Even if no counterexample can be found in any sub-problem, it only changes the order of visiting different sub-problems and so will not lead to a performance degradation.  Specifically, \tool has two variants, including \toolg, a greedy strategy that always prioritizes the sub-problems that are more likely to find counterexamples, and \toolb, a balanced strategy inspired by simulated annealing that gradually shifts from exploration to exploitation to locate the globally optimal sub-problems. 
We experimentally evaluate the performance of \tool on 690 verification problems spanning over 5 models with datasets \mnist and \cifar. 
Compared to the state-of-the-art approaches, we demonstrate the speedup of \tool for up to $25\times$ in \mnist, and up to $80\times$ in \cifar. 

\end{abstract}
\section{Introduction}
The rapid advancement of artificial intelligence (AI) has propelled the state-of-the-art across various fields, including computer vision, natural language processing, recommendation systems, etc. Recently, there is also a trend that adopts AI products in safety-critical systems, such as autonomous driving systems, in which neural networks are used in the perception module to perceive external environments. In this type of application, unexpected behaviors of neural networks can bring catastrophic consequences and intolerable social losses; given that neural networks are notoriously vulnerable to deliberate attacks or environmental perturbations~\cite{madry2017towards, goodfellow2015explaining}, effective quality assurance techniques are necessary in order to expose their defects timely before their deployment in the real world.

Formal verification is a rigorous approach that can ensure the quality of target systems by providing mathematical proofs on conformance of the systems with their desired properties. In the context of neural networks, formal verification aims to certify that a neural network, under specific input conditions, will never violate a pre-defined specification regarding its behavior, such as safety and robustness, thereby providing a rigorous guarantee that the neural network can function as expected in real-world applications. With the growing emphasis on AI safety, neural network verification has emerged as a prominent area of research, leading to the development of innovative methodologies and tools.

As a straightforward approach, exact encoding solves the neural network verification problem by encoding the inference process and specifications to be logical constraints and applying off-the-shelf or dedicated solvers, such as MILP solvers~\cite{tjeng2018evaluating} and SMT solvers~\cite{katz2017reluplex}, to solve the problem. However, this approach can be very slow due to the non-linearity of neural network inferences and so they are not scalable to neural networks of large sizes. In contrast, approximated methods~\cite{singh2019abstract,singh2018deepz,wang2018efficient} that over-approximate the reachable region of neural networks are more efficient: once the over-approximation satisfies the specification, the original output must also satisfy; however, if the over-approximation violates the specification, it does not indicate that the original output also violates, and in this case it may raise a \emph{false alarm}, i.e., a specification violation that actually does not exist.

To date, \emph{Branch-and-Bound} (\bab)~\cite{bunel2020branch} is the state-of-the-art verification approach that overarches multiple advanced verification tools, such as \abcrown~\cite{wang2021beta} and \marabou~\cite{wu2024marabou}. It is essentially a ``divide-and-conquer'' strategy, and is often jointly used with off-the-shelf approximated verifiers due to their great efficiency. Given a verification problem, it first applies an approximated verifier to the problem, and splits it to sub-problems if the verifier raises a false alarm, until all the sub-problems are verified or a real counterexample is detected, as a witness of specification violation. As application of approximated verifiers to sub-problems leads to less over-approximation, \bab can thus resolve the issue of false alarms of approximated verifiers and outperform their plain
application to the original problem. 

\paragraph*{Motivations}
As a ``divide-and-conquer'' strategy, \bab can produce a large space that consists of quantities of sub-problems, especially for verification tasks that are reasonably difficult. However, when exploring this space, the classic \bab adopts a naive ``first come, first served'' strategy, which ignores the importance of different sub-problems and thus is not efficient to reach a verification conclusion. 
Notably, different sub-problems produced by \bab are not equally important---with a part of the sub-problems, it is more likely to find a real counterexample that can show the violation of the specification, and thereby we can reach a conclusion for the verification efficiently without going through the remaining sub-problems.

\paragraph*{Contributions} 
In this paper, we propose a novel verification approach \tool, which is an \underline{\textbf{o}}rder \underline{\textbf{l}}eading \underline{\textbf{i}}ntelligent \underline{\textbf{v}}erification technique for \underline{\textbf{a}}rtificial neural networks. 

We first define an order called \emph{counterexample potentiality} over the different sub-problems produced by \bab, following our previous work~\cite{fukuda2025adaptive,olive}. Our order estimates \emph{how likely} a sub-problem is to contain a counterexample, based on the following two attributes: 
\begin{inparaenum}
    \item the level of problem splitting of the sub-problem, which implies how much approximation the approximated verifier needs to perform. The less approximation there is, the more likely the verifier can find a real counterexample;
    \item a quantity obtained by applying approximated verifiers to the sub-problem, which is an indicator of \emph{the degree} to which the neural network satisfies/violates the specification.
\end{inparaenum}
By these two attributes, we define the counterexample potentiality order over the sub-problems, as a proxy to suggest their likelihood of containing counterexamples and serve as guidance for our verification approach.

Then, we devise our approach \tool that exploits the order to explore the sub-problem space. In general, \tool favors the sub-problems that are more likely to contain counterexamples. Once it can find a real counterexample, it can immediately terminate the verification and return with a verdict that the specification can be violated; even if it cannot find such a counterexample, after visiting all the sub-problems, it can still manage to certify the neural network without a significant performance degradation. 

To exemplify the power of our order-guided exploration, we propose two variants of \tool: 
\begin{compactitem}
    \item \toolg employs a greedy exploitation strategy, which always prioritizes the sub-problems that have a higher likelihood of containing counterexamples. This approach focuses on the most suspicious regions of the sub-problem space, and is likely to quickly expose the counterexamples such that the verification can be concluded quickly; 
    \item To avoid overfitting of our proposed order, we also devise \toolb inspired by simulated annealing~\cite{kirkpatrick1983optimization}, the famous stochastic optimization technique. The approach works similarly to simulated annealing: it maintains a variable called \emph{temperature} that keeps decreasing throughout the process, which can control the trade-off between ``exploitation'' and ``exploration''. At the initial stage, the temperature is high, and so the algorithm allows more chances of exploring the sub-problems that are not promising; as the temperature goes down, it converges to the optimal sub-problem. As the exploration of the search space has been done at the early stage of the algorithm, it is likely to converge to the globally optimal sub-problems and thereby find counterexamples. In this way, we mitigate the potential issue of being too greedy, and aim to strike a balance between ``exploitation'' and ``exploration'' in the search for suspicious sub-problems. Compared to \emph{Monte Carlo tree search (MCTS)} adopted in~\cite{fukuda2025adaptive}, the stochasticity of simulated annealing offers the advantage that, while a fixed tree exploration policy in~\cite{fukuda2025adaptive} may fail to find counterexamples, repeating the experiments with different random seeds allows to explore the tree in a different way, thus increasing the chances of finding counterexamples.
\end{compactitem}

\paragraph*{Evaluation}
To evaluate the performance of \tool, we perform a large scale of experiments on 690 problem instances spanning over 5 neural network models associated with the commonly-used datasets \mnist and \cifar. 
By a comparison with the state-of-the-art verification approaches, we demonstrate the speedup of \tool, for up to $25\times$ in \mnist, and up to $80\times$ in \cifar. These experimental results involve the cases for which existing MCTS-based search~\cite{fukuda2025adaptive} does not work well, but our approach manages to find counterexamples thanks to repeated runs. Moreover, we also show the breakdown results of \tool for problem instances that are finally certified and the instances that are finally falsified. Experimental results show that \tool is particularly efficient for those instances that are falsified, which demonstrates the effectiveness of our approach.

For a verification problem whose result is unknown beforehand, it is always desired to reach the conclusion as quickly as possible. Given that verification of neural networks is typically time- and resource-consuming, our approach provides a meaningful way to accelerate the verification process.  While performing verification of neural networks with the aim of finding counterexamples sounds similar to approaches like testing or adversarial attacks, our approach differs fundamentally from those approaches, in the sense that, while those approaches deal with a single input each time and so they can never exhaust the search space, our approach deals with sub-problems that are finitely many, and so we can finally provide rigorous guarantees for specification satisfaction of neural networks.

\paragraph*{Paper organization} 
The rest of the paper is organized as follows: \S{}\ref{sec:overview} overviews our approach by using an illustrative example;  \S{}\ref{sec:preliminary} introduces the necessary technical background; \S{}\ref{sec:proposeidea} presents the proposed incremental verification approach; \S{}\ref{sec:evaluation} presents our experimental evaluation results; \S{}\ref{sec:literature} discusses related work. Conclusion and future work are presented in \S{}\ref{sec:conclusion}.

\section{Overview of The Proposed Approach}\label{sec:overview}
In this section, we use an example to illustrate how the proposed approach solves neural network verification problems.

\subsection{Verification Problem and \bab Approach}\label{sec:egclassicbab}

\begin{figure}[!tb]
    \centering
        \includegraphics[width=0.8\textwidth]{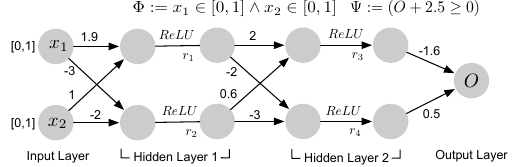}
    \caption{A neural network $\originNetwork$ and specification}
    \label{fig:network}
\end{figure}

Fig.~\ref{fig:network} depicts a (feed-forward) neural network $\originNetwork$. It has an input layer, an output layer, and two \emph{hidden layers} that are \emph{fully-connected}, namely, the output of each hidden layer is computed by taking the weighted sum of the output of the previous layer, and applying the ReLU activation function. The output $O$ of the neural network is computed by taking only the weighted sum of the output of the second hidden layer (without activation function). The weights of each layer are as labeled in Fig.~\ref{fig:network}. This neural network $\originNetwork$ is expected to satisfy such a specification: for any input $(x_1, x_2)\in [0,1]\times [0,1]$, it should hold that the output $O+2.5 \ge 0$. 
Verification aims to give a formal proof to certify that $\originNetwork$ indeed satisfies the specification, or give a counterexample instead if $\originNetwork$ does not satisfy the specification.

As the state-of-the-art neural network verification approach, Branch-and-Bound (\bab)~\cite{bunel2020branch} has overarched several famous verification tools, such as \abcrown~\cite{wang2021beta}. The application of \bab to neural network verification often relies on the combination with off-the-shelf verifiers, and a common choice involves the family of approximated verifiers. These verifiers can efficiently decide whether the neural network satisfies the specification, by constructing a convex over-approximation of neural network outputs: if the over-approximated output satisfies the specification, the original output must also satisfy; however, if not, it does not indicate that the original output also violates the specification, and so it may raise a \emph{false alarm} that reports a specification violation that is actually not existent. 

\begin{figure}[!tb]
\centering
    \begin{subfigure}[b]{0.7\textwidth}
    \centering
\includegraphics[width=\textwidth]{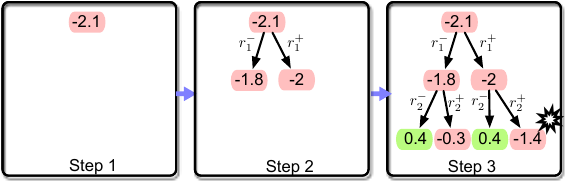}
    \caption{\bab's process for verification}
    \label{fig:babrun}
    \end{subfigure}
    
    \begin{subfigure}[b]{0.7\textwidth}
    \centering
        \includegraphics[width=\textwidth]{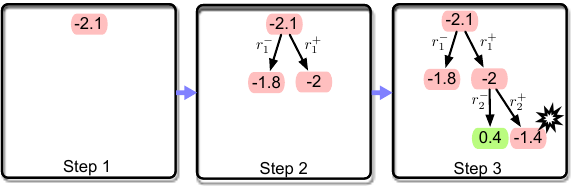}
    \caption{\tool's process for verification.}
    \label{fig:Oliverun}
    \end{subfigure}
    \caption{Neural network verification problem and solution via \bab (\bab \emph{vs.} \tool).}\label{fig:babvs}
\end{figure}

\bab is introduced to solve this problem. It is essentially a ``divide-and-conquer'' strategy that splits the problem when necessary and applies approximated verifiers to sub-problems to reduce the occurrences of false alarms. In this context, a problem can be split by predicating over the sign of the input of a ReLU function (i.e., by allowing the ReLU's input to be positive or negative only), such that it can be decomposed to two linear functions each of which is easier to handle.
\bab decides whether a (sub-)problem is needed to be split further, based on a value $\specDist$ returned from the approximated verifier. Intuitively, $\specDist$ indicates \emph{how far} the specification is from being violated by the over-approximation of a (sub-)problem. 
If $\specDist$ is positive, it implies that problem has been verified and so there is no need to split it; otherwise, it implies that the problem cannot be verified, and so \bab needs to check whether it is a false alarm, by checking whether the counterexample reported by the verifier is a spurious one. 
Fig.~\ref{fig:babrun} illustrates how \bab solves the verification problem in Fig.~\ref{fig:network}:

\begin{description}
    \descitem[Step 1] \bab first applies a verifier to the original verification problem, which returns a negative  $\specDist = -2.1$. By validating the counterexample reported by the verifier, \bab identifies that it is a false alarm and decides to split the problem;
    
    \descitem[Step 2]\label{step:secondStep} \bab splits the problem identified by the root node, and applies the verifier to the two sub-problems respectively. Again, it identifies that the verifier raises a false alarm for each of the sub-problems. Therefore, it needs to split both sub-problems;
    
    \descitem[Step 3] Similarly, \bab  applies the verifier in turn to the newly expanded sub-problems. It manages to verify two sub-problems (i.e., the two nodes both with $\specDist = 0.4$), and also identifies a false alarm (i.e., the node with $\specDist = -0.3$). In the node with $\specDist = -1.4$, \bab finds that the counterexample associated with this sub-problem is a real one; because having one such real counterexample suffices to show that the specification can be violated, \bab terminates the verification at this point with a verdict that the specification is violated.   
\end{description}    

\subsection{The Proposed Approach}
For a verification problem that is considerably difficult, \bab can often produce a space that contains a large number of sub-problems. However, as shown in Fig.~\ref{fig:babrun}, the classic \bab explores these sub-problems by a naive ``first come, first served'' order, which ignores the difference of the sub-problems in terms of their importance,  and thus can be very inefficient. 

To bridge this gap, we propose an approach that explores the space of sub-problems guided by an order of importance of different sub-problems. Specifically, we identify the importance of different sub-problems by their likelihood of containing counterexamples, in the sense that the more likely a sub-problem contains counterexamples, it should be more prioritized in the exploration of the sub-problem space---once we can find a counterexample there, we can immediately terminate the verification and draw a conclusion. Even if it does not manage to find a counterexample in any problem, it just visits the sub-problems in a different order from the original \bab, so it still would not be much slower.

Fig.~\ref{fig:Oliverun} demonstrates how our approach solves the same verification problem in Fig.~\ref{fig:network}. 

\begin{compactitem}
    \item In early stages, \tool works the same as \bab, i.e., it visits the (sub-)problems and obtains the same feedback $\specDist$ from the verifier, and based on that, it decides to split the (sub-)problems and apply verifiers to sub-problems;
    \item The difference comes from the rightmost plot of Fig.~\ref{fig:Oliverun}, in which \tool prioritizes the sub-problem  that has  $\specDist = -2$ rather than its sibling that has $\specDist = -1.8$ to expand. This is because $\specDist$ is a indicator that reflects how far the sub-problem is from being violated (based on the over-approximation performed by the verifier), and in this case $\specDist = -2$ signifies that this sub-problem has been violated \emph{more} than the other and so it is more likely to find a counterexample there. Indeed, \tool  manages to find a real counterexample in one of its sub-problems, thereby terminating the verification immediately after that. 
\end{compactitem}
By this strategy, we manage to save two visits to sub-problems compared to \bab, each of which consists of an expensive problem solving process, and therefore, it reaches the conclusion of the problem more efficiently than \bab.

\paragraph*{A variant of our approach inspired by simulated annealing} Essentially, our proposed approach changes the naive ``breadth-first'' exploration of the sub-problem space in the classic \bab to an intelligent way guided by the likelihood of finding counterexamples in different sub-problems. However, as this guidance consists of a total order over different branches, it may lose some chances of finding counterexamples in some branches that initially seem not promising. 

To mitigate this issue, we also propose a variant of our approach, inspired by simulated annealing~\cite{kirkpatrick1983optimization}. It works as follows: throughout the process, we maintain a temperature that keeps decreasing slowly, which is used to control the trade-off between ``exploitation'' and ``exploration''.
Initially, when the temperature is high, our algorithm tends to explore the sub-problem space by assigning a considerably high probability to accept a sub-problem that is not promising; as temperature goes down, it converges to the optimal sub-problem; due to the exploration of the search space at the initial stage, the algorithm is likely to converge to the globally optimal sub-problems. In this way, we can mitigate the side-effect of being too greedy, and strike a balance between ``exploitation'' and ``exploration''.  

\section{Preliminaries}\label{sec:preliminary}
In this section, we first introduce the neural network verification problem, and then introduce the state-of-the-art verification approach called \emph{branch-and-bound (\bab)}.

\subsection{Neural Network Verification Problem}
In this paper, we consider feed-forward neural networks, as depicted in Fig.~\ref{fig:network}.

\begin{definition}[Neural networks]
     A (feed-forward) neural network $\originNetwork: \mathbb{R}^n \to \mathbb{R}^m$ maps an $n$-dim input to an $m$-dim output (see Fig.~\ref{fig:network} for an example). 
It accomplishes the mapping by alternating between affine transformations (parametrized by weight matrix $W_i$ and bias vector $B_i$) and non-linear activation functions $\sigma$; namely, the output $\bm{x}_i$ of layer $i$ ($i\in\{1, \ldots, L\}$) is computed based on the output $\bm{x}_{i-1}$ of layer $i-1$, as follows:
\begin{math}
    \bm{x}_i = \sigma(W_i\bm{x}_{i-1} + B_i)
\end{math}. Here, the computation unit that computes each dimension of $\bm{x}_i$ is called a \emph{neuron}; in each layer $i$, the number of neurons is equivalent to the number of dimensions of  $\bm{x}_i$. Specially, $\bm{x}_0$ is the input and $\bm{x}_L$ is the output of the neural network. There could be different choices for the non-linear activation functions, such as ReLU, sigmoid and tanh;
following many existing works~\cite{liu2021algorithms}, we adopt Rectified Linear Unit (ReLU) (i.e., $\relu(x) = \max(0, x)$) as the activation function in our neural networks. 
\end{definition}

Specifications are logical expressions that formalize users' desired properties about neural networks. In this paper, we adopt the following notation in Def.~\ref{def:spec} as our specification formalism, and later we show that it can be used to formalize commonly-used properties, such as \emph{local robustness} of neural networks.
\begin{definition}[Specification]\label{def:spec}
    We denote by a pair $(\Phi, \Psi)$ a \emph{specification} for neural networks, where $\Phi$ is called an input specification that predicates over the input region, and $\Psi$ is called an output specification that predicates over the output of a neural network. Specifically, we denote the output specification $\Psi$ as follows: $f(\originNetwork(\bm{x}_0)) > 0$, where $f\colon \R^m \to \R$ is a function that maps an $m$-dim vector to a real number.
\end{definition}

\begin{definition}[Verification problem]
    A verification problem aims to answer the following question: given a neural network $\originNetwork$ and a specification $(\Phi, \Psi)$, whether $\Psi(\originNetwork(\bm{x}))$ holds, for any input $\bm{x}$ that holds $\Phi(\bm{x})$. A verifier is a tool used to answer a verification problem: it either returns $\true$, certifying $\originNetwork$'s satisfaction to the specification, or returns $\false$ with a \emph{counterexample} $\ce$, which is an input that holds $\Phi(\ce)$ but does not hold $\Psi(\ce)$.
\end{definition}

 We now explain how our notations can be used to formalize a neural network verification problem against local robustness, which is a property often considered in the domain of image classification. 
Local robustness requires that a neural network classifier should make consistent classification for two images $\bm{x}$ and $\bm{x}_0$, where $\bm{x}$ is produced by adding small perturbations to $\bm{x}_0$. Formally, 
given a reference input  $\bm{x}_0$, $\Phi(\bm{x})$ requires that the input $\bm{x}$ must stay in the region $\{\bm{x} \mid \|\bm{x} - \bm{x}_0\|_\infty \leq \epsilon \}$, where $\|\cdot\|_\infty$ denotes the $\ell^\infty$-norm distance metric and $\epsilon$ is a small real value, and $\Psi(\originNetwork(\bm{x}))$ requires that $\min_{1\le i \le m, i\neq i_l} (\originNetwork(\bm{x})_{i_l} - \originNetwork(\bm{x})_i) > 0$, where $\originNetwork(\bm{x})_i$ denotes the $i$-th component of the output vector $\originNetwork(\bm{x})$, and $i_l$ is the label of $\bm{x}_0$ inferred by the neural network, i.e., $i_l = \argmax_{1\le i\le m} \originNetwork(\bm{x}_0)_i$.

\subsection{Branch-and-Bound (\bab) -- State-of-the-Art Verification Approach}\label{sec:bab}
There have been various approaches proposed to solve the neural network verification problem. Exact encoding~\cite{katz2017reluplex,tjeng2018evaluating,cheng2017maximum} formalizes the inference process and the specification of a neural network to be an SMT or optimization problem and solves them by dedicated~\cite{katz2017reluplex} or off-the-shelf~\cite{tjeng2018evaluating, cheng2017maximum} solvers. These approaches are sound and complete, but due to the non-linearity of neural network inferences, they suffer from severe scalability issues and cannot handle neural networks of large sizes. To resolve this issue, approximated approaches~\cite{singh2018fast, singh2018deepz, singh2019abstract} over-approximate the output of neural networks by linear relaxation, and they can certify the satisfaction of specification if the over-approximated output satisfies the specification. These approaches are typically efficient and sound; however, they are not complete, i.e., they may raise false alarms with spurious counterexamples, if the over-approximated output does not satisfy the specification. As the state-of-the-art, Branch-and-Bound (\bab)~\cite{bunel2020branch} employs approximated verifiers and splits the problem when necessary (i.e., if a false alarm arises in a problem), because by solving sub-problems it can improve the precision of approximation and reduce the occurrence of false alarms. \bab keeps problem splitting until all of the sub-problems are certified (such that the original problem can be certified) or a real counterexample is detected with a sub-problem (such that the original problem is falsified). 

In the following, we explain the necessary ingredients and detailed workflow of \bab.

\paragraph*{Approximated verifiers}
    Approximated verifiers, denoted as \verifier, are a class of verifiers that solve a verification problem by computing an over-approximation of the output region of neural networks---if the over-approximation satisfies the specification, it implies that the original output region must also satisfy it. 
Typically, \verifier can return a tuple $\langle \specDist, \ce \rangle$, where $\specDist\in\R$ is called a \emph{verifier assessment}, which is a quantity that indicates \emph{the extent} to which the over-approximated output satisfies the specification. Formally, given an \verifier, it can construct a  region $\hat{\Omega}  \supseteq \{\originNetwork(\bm{x}) \mid \Phi(\bm{x}) = \true\}$, which over-approximates the output region of a neural network $\originNetwork$; then, \verifier computes  $\specDist$ as follows: $\specDist := \min_{\bm{y}\in \hat{\Omega}}f(\bm{y})$, where $f$ is as defined in Def.~\ref{def:spec}. If $\specDist$ is positive, it implies that the original output also satisfies the specification, and so it can certify the satisfaction of neural networks. Conversely, if $\specDist$ is negative, \verifier deems that the specification is violated and provides a counterexample input $\ce$ as an evidence. However, due to the over-approximation, this reported violation of \verifier could be a false alarm and the counterexample $\ce$ may be a spurious one, i.e., the corresponding output $\originNetwork(\ce)$ of $\ce$ actually satisfies the specification. In this case, just based on the result of \verifier, we cannot infer whether $\originNetwork$ holds or violates the specification. This situation is referred to as the \emph{completeness issue} of approximated verifiers.

There have been various approximation strategies that can be used to implement approximated verifiers, e.g.,  \emph{DeepPoly}~\cite{singh2019abstract}  and \emph{ReluVal}~\cite{wang2018formal}, and mostly, the over-approximation is accomplished by using linear constraints to bound the output range of the non-linear ReLU functions. In particular, we assume that our adopted approximated verifiers hold the \emph{monotonicity} property, namely, for two output regions of neural networks that hold $\Omega_1\subset\Omega_2$, if we over-approximate them by using the same approximated verifier \verifier, the obtained over-approximation $\hat{\Omega}_1$ and  $\hat{\Omega}_2$ hold that $\hat{\Omega}_1 \subseteq \hat{\Omega}_2$. While there can be different options of over-approximation strategies, our proposed approach is orthogonal to them, so we can adopt any approximated verifiers that hold our assumption about monotonicity.

\paragraph*{Branch-and-Bound (\bab)}
\bab is the state-of-the-art neural network verification approach, and has been adopted in several advanced verification tools, such as \abcrown~\cite{wang2021beta} and \marabou~\cite{wu2024marabou}. It is essentially a ``divide-and-conquer'' strategy, that divides a verification problem adaptively and applies off-the-shelf verifiers (such as  approximated verifiers) to solve the sub-problems. Because approximated verifiers often achieve better precision on sub-problems, \bab can thus overcome the weaknesses of the plain application of approximated verifiers to the original problem and resolve the issues of false alarms.

Before looking into the details of \bab, we first introduce ReLU specification, which is an important notion in the algorithm of \bab. 

\begin{definition}[ReLU specification] Let \dnn be a neural network consisting of $K$ neurons, and  $\reluInput_i\in\R$ ($i\in\{1, \ldots, K\}$) be the input for the ReLU function in the $i$-th neuron. An \emph{atomic proposition} $\atomicProp$ w.r.t.  the $i$-th neuron  is defined as either $\reluInput_i \ge 0$ (written as $\reluActionPos_i$) or $\reluInput_i < 0$ (written as $\reluActionNeg_i$). Then, a ReLU specification $\reluSpec$ is defined as the conjunction of a sequence $\reluSet(\reluSpec) := \{\atomicProp_1, \ldots, \atomicProp_{|\reluSpec|}\}$ of atomic propositions, where each $\atomicProp\in\reluSet(\reluSpec)$ is defined w.r.t. a distinct neuron. 
$|\reluSpec|$ is the number of atomic propositions in $\reluSet(\reluSpec)$; specially if $|\reluSpec| = 0$, $\reluSpec$ is denoted as $\top$. Moreover,
we  define a \emph{refinement relation} $\prec$ over ReLU specifications, namely, given two ReLU specifications $\reluSpec_i$ and $\reluSpec_j$, we say $\reluSpec_j$ refines $\reluSpec_i$  (denoted as $\reluSpec_i \prec \reluSpec_j$) if and only if 
$\reluSet(\reluSpec_i) \subset \reluSet(\reluSpec_j)$.
\end{definition}

By selecting a neuron $i$, we can split the ReLU function into two linear functions, each identified by a predicate $\reluActionPos_i$ or $\reluActionNeg_i$ over the input of ReLU. Thereby, a neural network verification problem boils down to two sub-problems, for each of which \verifier does not need to perform over-approximation for the ReLU function in the $i$-th neuron.

\begin{algorithm}[!tb]
\caption{Branch-and-Bound (\bab)~\cite{bunel2020branch}}
\label{alg:bab}
\begin{algorithmic}[1]
\Require A neural network $\dnn$, an input specification $\inputSpec$, an output specification $\outputSpec$, an approximated verifier $\verifier(\cdot)$, and a ReLU selection heuristic $\reluHeuristic(\cdot)$. 
\Ensure A $\mathit{verdict}\in\{\true, \false\}$.
\State $\queue \gets \{\top\}$ \label{line:initQueue} 
\State $\specTree\gets \emptyset$ \label{line:initialSpecTree} 

\Function{BaB}{$\dnn, \inputSpec, \outputSpec, \queue$}
\label{line:functionBaB}
\If{$\Call{Empty}{\queue}$}
\State \Return \true \label{line:Qempty} 
\EndIf
\State $\reluSpec \gets \Call{Pop}{\queue}$ \label{line:popQ} 
\State $\langle\specDist, \ce\rangle\gets \verifier(\dnn, \inputSpec, \outputSpec, \reluSpec)$ \label{line:callAppVerifier} 
\State $\specTree\gets\specTree\cup \{\langle\reluSpec, \specDist\rangle\}$ \label{line:treeRecord} 
\If{$\specDist < 0$} 
\If{$\Call{Valid}{\ce, \dnn, \outputSpec}$} \label{line:validCE} 
\State \Return $\false$ \label{line:babFalse} 
\Else \label{line:invalidCE}
\State $\reluAction_k \gets \reluHeuristic(\reluSpec)$ \label{line:selectNextRelu}
\For{$a \in \{\reluActionPos_k, \reluActionNeg_k\}$}
\State $\Call{Push}{Q, \reluSpec\land a}$ \label{line:push}
\EndFor
\EndIf
\EndIf
\State  \Return $\Call{BaB}{\dnn, \inputSpec, \outputSpec, \queue}$ \label{line:recursiveCallBaB} 
\EndFunction

\end{algorithmic}
\end{algorithm}

The workflow of \bab is presented in 
Alg.~\ref{alg:bab}. In Alg.~\ref{alg:bab}, we allow \verifier to take an additional argument, namely, a ReLU specification $\reluSpec$, which identifies a sub-problem by adding the constraints in $\reluSet(\reluSpec)$ to constrain the inputs of a number of selected ReLU functions. It uses a FIFO queue \queue to maintain the problem to be solved, which is initialized to be a set that consists of $\top$ only, identifying the original verification problem. 
\begin{compactenum}[i)]
    \item First, it applies \verifier to the original problem (Line~\ref{line:callAppVerifier}): if \verifier returns a positive $\specDist$, or a negative $\specDist$ with a valid counterexample $\ce$ (Line~\ref{line:validCE}), then verification can be terminated with a verdict returned accordingly; 
    \item \label{step:case} In the case it returns a negative $\specDist$ with a spurious counterexample (Line~\ref{line:invalidCE}), it divides the verification problem into two sub-problems. This is achieved by first selecting a neuron (i.e., a ReLU) in the network according to a pre-defined ReLU selection heuristics $\reluHeuristic$ (Line~\ref{line:selectNextRelu}), and then identifying two sub-problems each identified by an additional constraint on the input of the selected ReLU function (Line~\ref{line:push}). 
    \item It applies \verifier to the new sub-problems respectively, and decides whether it needs to further split the sub-problem, following the same rule in Step~\ref{step:case}. In Alg.~\ref{alg:bab}, this is implemented by recursive call of the \textsc{BaB} function in Line~\ref{line:recursiveCallBaB}.
\end{compactenum}

In \bab, the ReLU selection heuristics $\reluHeuristic$ involves an order over different neurons such that it can select the next ReLU based on an existing ReLU specification. There has been a rich body of literature that proposes different ReLU selection strategies, such as DeepSplit~\cite{henriksen2021deepsplit} and FSB~\cite{de2021improved}. However, our proposed approach in~\S{}\ref{sec:proposeidea} is orthogonal to these strategies and so it can work with existing strategies.  In this work, we follow an existing neural network verification approach~\cite{ugare2023incremental} and we adopt the state-of-the-art ReLU selection strategy in~\cite{henriksen2021deepsplit}.

As a result of Alg.~\ref{alg:bab}, it forms a binary tree $\specTree$ (see Line~\ref{line:initialSpecTree} and Line~\ref{line:treeRecord}) that records the history of problem splitting during the \bab process. In this tree, each node $\langle\reluSpec, \specDist\rangle$ denotes a sub-problem, identified by a ReLU specification $\reluSpec$ and the verifier assessment $\specDist$ returned by \verifier that signifies the satisfaction of the sub-problem.

\begin{lemma}[Soundness and completeness]\label{lem:bab}
    The \bab algorithm is sound and complete.
\end{lemma}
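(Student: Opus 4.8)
The plan is to prove two directions separately --- \emph{soundness} (whenever Alg.~\ref{alg:bab} returns a verdict, that verdict is correct) and \emph{completeness} (Alg.~\ref{alg:bab} always terminates and returns some verdict). Throughout, the key structural fact I would rely on is the correspondence between a ReLU specification $\reluSpec$ and the restricted input region it identifies: fixing the sign of the $i$-th ReLU input splits the admissible input region into two pieces whose union is the original region, so the sub-problems attached to the children of a node jointly cover exactly the sub-problem at that node. Consequently, at every stage of the algorithm the set of (sub-)problems still ``live'' --- those currently in $\queue$ together with those already certified --- is a partition (up to the measure-zero boundary $\reluInput_i = 0$, which is handled by using the closed predicate $\reluActionPos_i : \reluInput_i \ge 0$) of the original problem. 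I would state this as an invariant and prove it by induction on the recursion depth of \textsc{BaB}.

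For \textbf{soundness}, there are two ways a verdict is produced. If $\false$ is returned (Line~\ref{line:babFalse}), it is guarded by $\Call{Valid}{\ce,\dnn,\outputSpec}$, so $\ce$ is a genuine input with $\inputSpec(\ce)$ true and $\outputSpec(\dnn(\ce))$ false; since the restricted region of any sub-problem is contained in the original input region, $\ce$ witnesses violation of the original specification, so $\false$ is correct. If $\true$ is returned (Line~\ref{line:Qempty}), the queue is empty, meaning every sub-problem ever pushed was eventually popped and, by the invariant, every leaf sub-problem was certified --- i.e. \verifier returned $\specDist \ge 0$ on it (the only way to pop a node without re-pushing children, other than finding a valid counterexample, which would have returned $\false$). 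By the defining property of approximated verifiers ($\specDist := \min_{\by\in\hat\Omega} f(\by)$ with $\hat\Omega \supseteq \{\dnn(\bx)\mid\inputSpec(\bx)\}$ restricted to that sub-problem), $\specDist\ge 0$ implies $f(\dnn(\bx))\ge 0$, hence $\outputSpec$ holds, on every input in that sub-problem's region; since the leaf regions cover the original input region, $\outputSpec$ holds everywhere on it, so $\true$ is correct. Here I would need to be slightly careful with the strict-vs-nonstrict inequality in Def.~\ref{def:spec} ($f>0$) versus $\specDist \ge 0$; I expect the paper intends $\specDist > 0$ for the "verified" branch (cf. the condition $\specDist < 0$ on Line~10 and the text's "if $\specDist$ is positive"), and I would simply align with that convention.

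For \textbf{completeness}, the crux is termination, and this is the step I expect to be the main obstacle. The argument is that there are only finitely many neurons $K$ in \dnn, and each push operation (Line~\ref{line:push}) strictly extends the current ReLU specification by one atomic proposition on a \emph{distinct} neuron (guaranteed because $\reluHeuristic$ selects a neuron not already constrained in $\reluSpec$). Hence every root-to-node path in the tree $\specTree$ has length at most $K$, so $\specTree$ is finite (at most $2^{K+1}$ nodes), and each node is processed exactly once by a \textsc{BaB} call; therefore the queue empties after finitely many recursive calls unless $\false$ is returned first. Combined with soundness, whatever verdict is eventually returned is correct, which gives completeness. The subtlety I would flag is twofold: (i) I must invoke the implicit assumption that when a sub-problem has \emph{all} its ReLUs constrained (a "complete" ReLU specification of size $K$), the network is piecewise-linear on that region and \verifier is exact there --- so it either certifies it or returns a \emph{valid} counterexample, never a spurious one --- which is what stops the recursion from needing to split a node that can no longer be split; this is the natural place the monotonicity/precision assumption on \verifier does real work. (ii) I should confirm $\reluHeuristic$ never re-selects an already-fixed neuron; the definition of ReLU specification (atomic propositions on distinct neurons) presupposes this, so I would state it as a standing assumption on $\reluHeuristic$ and move on.
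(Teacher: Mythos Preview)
Your argument is correct and in substance coincides with the paper's: both rely on (i) the leaf sub-problems partitioning/covering the original input region, (ii) the soundness of \verifier on each sub-problem, (iii) the fact that any returned counterexample is validated, and (iv) termination because there are only finitely many neurons, with the additional observation that once every ReLU is fixed the sub-problem is linear and \verifier becomes exact.

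The one genuine difference is organizational: you take \emph{soundness} to mean ``every returned verdict is correct'' and \emph{completeness} to mean ``the algorithm terminates,'' whereas the paper slices it by verdict --- its \emph{soundness} is ``\true is correct'' (your coverage + \verifier-soundness argument), its \emph{completeness} is ``\false is correct'' (your validated-$\ce$ argument, plus the linear-at-full-split observation), and termination is treated as a separate third item. Your decomposition is the more standard verification-theoretic one and is arguably cleaner; the paper's has the virtue of matching exactly how the text earlier describes approximated verifiers (``sound but not complete'' meaning they may raise false alarms). Content-wise nothing is missing on your side; if anything, your treatment of the invariant, the $2^{K+1}$ bound, and the two flagged subtleties (exactness at depth $K$, and $\reluHeuristic$ never reselecting a fixed neuron) is more explicit than the paper's brief sketch. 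Just be aware that if you are asked to align with the paper's terminology, you should relabel your two verdict-correctness cases as ``soundness'' and ``completeness'' respectively and present termination separately.
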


\begin{proof}
    Soundness requires that if \bab returns \true, the neural network must satisfy the specification. 
    The soundness of \bab relies on that: 1) \verifier is sound; 2) the ReLU specifications identified by the leaf nodes of the \bab tree cover all the cases about the input conditions of the split ReLUs in the neural network.

    Completeness requires that if \bab returns \false, the neural network must violate the specification. The completeness of \bab relies on the fact that the counterexamples $\ce$ reported by \bab are all validated and so they must be real. Moreover, in the worst case if all neurons are split, the sub-problem will be linear and so a $\ce$ must be a real one if it exists.

    Finally, \bab is guaranteed to return either \true or \false within a finite time budget, because the number of neurons in a neural network is finite. 
\end{proof}

\section{\tool: The Proposed Verification Approach}\label{sec:proposeidea}

As a ``divide-and-conquer'' strategy, \bab can produce a huge space that consists of quantities of sub-problems; however, as shown in Alg.~\ref{alg:bab}, the existing \bab approach explores this space in a naive ``first come, first served'' manner (implemented by the \emph{first-in-first-out} queue in Alg.~\ref{alg:bab} that stores the (sub)-problems to be solved), which can be very inefficient to exhaust the sub-problem space. 

Our proposed approach involves exploring the \bab tree in an intelligent fashion, guided by the severity of different tree nodes. 
In~\S{}\ref{sec:cepo}, we showcase such a severity order, defined by the probability of finding counterexamples with a sub-problem. The intuition behind this order is that, by prioritizing sub-problems that are more likely to find counterexamples, we may quickly find a counterexample and thereby immediately terminate the verification. If we cannot find it  after visiting all sub-problems, we achieve certification of the problem.

\subsection{Counterexample Potentiality Order}\label{sec:cepo}
We introduce an order, previously proposed in~\cite{fukuda2025adaptive,olive}, over different sub-problems based on their probability of containing counterexamples. Given a node in \bab tree that identifies a sub-problem, we can infer this probability using the following two attributes:
\begin{description}
\item[\emph{Node depth.}]
In a \bab tree, the depth of a node signifies the levels of problem splitting, and for more finely-split sub-problems,  \verifier introduces less over-approximation. Because of this, if a node $\reluSpec$ with a greater depth is still deemed by \verifier as violating the specification (i.e., the verifier assessment $\specDist$ is negative), it is more likely that $\reluSpec$ indeed contains real counterexamples. 

\item[\emph{Verifier assessment.}] Given a node in \bab tree that identifies a sub-problem, its $\specDist$ returned by \verifier can be considered as a quantitative indicator of \emph{how far} the sub-problem is from being violated. Due to our assumption about the monotonicity of \verifier in performing over-approximation, given a fixed \verifier,  $\specDist$ is correlated to the original output region of the neural network. Therefore, in the case $\specDist$ is negative, the greater $|\specDist|$ is, there is a higher possibility that the original output region is closer to violation of the specification, and so it is more likely that  the sub-problem contains a real counterexample.
\end{description}

Based on the above two node attributes, we  define the suspiciousness of a \bab tree node, and then define a Counterexample POtentiality (\cepo) order over different nodes.

\begin{definition}[Suspiciousness of sub-problems]\label{def:susp}
Let $\reluSpec$ be a node of the \bab tree that has a verifier assessment $\specDist$ (with a counterexample $\ce$ if $\specDist < 0$). The suspiciousness $\seman{\reluSpec}\in [0,1]\cup \{+\infty, -\infty\}$ of the node $\reluSpec$ 
maps $\reluSpec$ to a real number as follows:
\begin{align*}
    \seman{\reluSpec} := \begin{cases}
        -\infty  & \text{if } \specDist > 0 \\
        +\infty  & \text{if } \specDist<0 \text{ and } \mathtt{valid}(\ce) \\
        \lambda\frac{\mathtt{depth}(\reluSpec)}{K} + (1-\lambda)\frac{\specDist}{\specDistMin} & \text{otherwise}
    \end{cases}
\end{align*}
where $\lambda\in [0,1]$ is a parameter that controls the weights of the two attributes, and $K$ is the total number of neurons (i.e., ReLUs) in the network.
\end{definition}

Intuitively, the suspiciousness of a node encompasses a heuristic that estimates the probability of the relevant sub-problem violating the specification. It is particularly meaningful in the case when $\specDist<0$ and the counterexample is spurious, and set to be $+\infty$ if the sub-problem is provably violated, and $-\infty$ if the sub-problem is certified. By their suspiciousness, we define the \cepo order  over different nodes as follows:

\begin{definition}
[Counterexample potentiality (\cepo) order] \label{def:cePotentialOrder}
    Let $\reluSpec_1$ and $\reluSpec_2$ be two nodes in a \bab tree, and $\specDist_1$ and $\specDist_2$ be verifier assessments for $\reluSpec_1$ and $\reluSpec_2$, respectively. We define a \cepo order $\partialOrder$ between the two nodes as follows: 
    \begin{align*}
        \reluSpec_1 \partialOrder \reluSpec_2 \quad\textit{iff}\quad \seman{\reluSpec_1} < \seman{\reluSpec_2}
    \end{align*}
\end{definition}
The \cepo order allows us to sort the nodes in \bab tree,\footnote{In the case if two nodes have the same suspiciousness, we simply impose a random order; in the case both are $-\infty$ or $+\infty$, the comparison is meaningless and so we do not need to sort them.} and so it can serve as a guidance to our verification approach in~\S{}\ref{sec:greedy} and~\S{}\ref{sec:balanced}.

\subsection{\toolg: Greedy Exploration of \bab Tree}\label{sec:greedy}
\begin{algorithm}[!tb]
\caption{\toolg: The proposed greedy algorithm}
\label{alg:dfs_nn_verify}
\begin{algorithmic}[1] 
\Require A neural network $\originNetwork$, input and output specification $\Phi$ and $\Psi$, an approximated verifier $\verifier(\cdot)$, a ReLU selection heuristic $\reluHeuristic(\cdot)$, and a hyperparameter $\lambda$.
\Ensure A $\mathit{verdict} \in \{\true, \false, \mathtt{timeout}\}$ 

\Statex
\State $\specTree\gets \{\varepsilon\}$
\State $\langle\specDist, \ce\rangle\gets \verifier(\originNetwork, \Phi, \Psi, \varepsilon)$ \label{line:appToOrigin}
\State $\reward(\varepsilon)\gets \seman{\varepsilon}$ \Comment{a metric for child selection}

\If{$\specDist<0$ and $\mathtt{not\;valid}(\ce)$}
\While{not reach termination condition}\label{line:whileLoopGreedy}
\State \Call{GreedyBaB}{$\varepsilon, \originNetwork, \Phi, \Psi$}\label{line:callGreedyBaBRoot}
\EndWhile
\State\label{line:returnGreedy}\Return
$
\begin{cases}
    \true & \text{if } \reward(\varepsilon) = -\infty \\
    \false & \text{if } \reward(\varepsilon) = +\infty \\
    \mathtt{timeout} & \text{otherwise}
\end{cases}
$
\Else
\State \label{line:originReturnGreedy}\Return
$
\begin{cases}
    \true & \text{if }\specDist > 0 \\
    \false & \text{if } \specDist < 0 \text{ and } \mathtt{valid}(\ce)
\end{cases}
$
\EndIf
\Statex
\Function{GreedyBaB}{$\reluSpec, \originNetwork, \Phi, \Psi$}
\State $\reluAction_k\gets \reluHeuristic(\reluSpec)$ \label{line:reluSelectGreedy}
\If{$\reluSpec \cdot \reluActionPos_k \in \specTree$} \Comment{check existence of children}\label{line:checkExistsGreedy}
\State $\reluSpec^*\gets \reluSpec\cdot a^*$\quad s.t. $a^* \gets \argmax_{a\in\{\reluActionPos_k, \reluActionNeg_k\}}{\reward(\reluSpec\cdot a)}$
\State \Call{GreedyBaB}{$\reluSpec^*, \originNetwork, \Phi, \Psi$}\label{line:recursiveCallGreedy}
\Else
\For{$a\in\{\reluActionPos_k, \reluActionNeg_k\}$}\label{line:expandChildrenGreedy}
    \Comment{$\specTree$ expansion via BaB.}
    \State $\langle\specDist, \ce\rangle \gets \verifier(\originNetwork, \Phi, \Psi, \reluSpec \cdot a)$ \label{line:verifierSubTreeGreedy}
    \Comment{apply \verifier with $\reluSpec\land a$}

    \State $\reward(\reluSpec\cdot a) \gets    \seman{\reluSpec\cdot a}$ \label{line:rewardSubtreeGreedy}
    \Comment{compute $\reward$}
 
    \State $\specTree \gets \specTree \cup \{\reluSpec\cdot a\}$ \label{line:updateTreeGreedy}
    \Comment{add to the tree of $\reluSpec\land a$}
    
\EndFor
\EndIf
\State $\reward(\reluSpec)\gets \argmax_{a\in\{\reluActionPos_k, \reluActionNeg_k \}}{\reward({\reluSpec\cdot a})}$\label{line:backpropagateGreedy}

\EndFunction
\end{algorithmic}
\end{algorithm}

The \toolg algorithm, presented in Alg.~\ref{alg:dfs_nn_verify}, implements a greedy strategy that explores the space of \bab tree nodes guided by the \cepo order. Compared to the classic \bab, it always selects the most suspicious node to expand, such that it can maximize the probability of finding counterexamples and thereby conclude the verification problem efficiently.

The algorithm begins with applying the approximated verifier \verifier to the original verification problem, identified by the ReLU specification $\varepsilon$ (Line~\ref{line:appToOrigin}), which returns a tuple $\langle\specDist, \ce\rangle$ containing a verifier assessment $\specDist$, possibly followed by a counterexample $\ce$. The suspiciousness $\seman{\varepsilon}$ of the root node is then computed based on this assessment. At this point, if $\specDist>0$ or if $\specDist<0$ with a valid counterexample $\ce$, the algorithm can be immediately terminated with a conclusive verdict (Line~\ref{line:originReturnGreedy}). Otherwise, the algorithm enters its main loop where it iteratively calls \textsc{GreedyBaB} to split and verify the sub-problems until a termination condition is reached (Line~\ref{line:whileLoopGreedy}-\ref{line:returnGreedy}). We elaborate on the termination condition later.

The \textsc{GreedyBaB} function implements the main process of tree exploration and problem splitting. By  each of its execution, it selects the maximal sub-problem in terms of the \cepo order in the tree and applies \verifier to its subsequent sub-problems with the aim of finding counterexamples. The function begins with selecting a ReLU $\reluAction_k$ as a successor of the current node (Line~\ref{line:reluSelectGreedy}) using the pre-defined ReLU selection heuristics $\reluHeuristic$ (see~\S{}\ref{sec:bab}), and then it recursively calls \textsc{GreedyBaB} until it reaches the greatest node $\reluSpec$, in terms of the \cepo order, whose children are not expanded yet (Line~\ref{line:checkExistsGreedy}-\ref{line:recursiveCallGreedy}). After reaching such a node $\reluSpec$, it expands the children of $\reluSpec$, by applying \verifier respectively to the two sub-problems identified by the children of $\reluSpec$ (Line~\ref{line:expandChildrenGreedy}-\ref{line:verifierSubTreeGreedy}). It also computes and records the suspiciousness of the newly expanded children (Line~\ref{line:rewardSubtreeGreedy}), and updates $\specTree$ that keeps track of the tree (Line~\ref{line:updateTreeGreedy}). Lastly, the greater suspiciousness over the children are propagated backwards to ancestor nodes until the root (Line~\ref{line:backpropagateGreedy}), serving as a reference for future node selection.

\paragraph*{Termination condition} 
The main loop in Line~\ref{line:whileLoopGreedy} of Alg.~\ref{alg:dfs_nn_verify} can be terminated on the satisfaction of any of the following three conditions:
\begin{compactitem}
    \item $\reward(\varepsilon) = -\infty$: This implies that all leaf nodes have been verified successfully (i.e., $\specDist > 0$ for all leaves, otherwise by Line~\ref{line:backpropagateGreedy} $\reward(\varepsilon)$ cannot be $-\infty$), allowing the algorithm to return a $\true$ verdict that certifies the specification satisfaction;
    \item $\reward(\varepsilon) = +\infty$: This implies that a valid counterexample has been found in some leaf node, and so by Line~\ref{line:backpropagateGreedy} its suspiciousness of $+\infty$ can be back-propagated to the root node; in this case, the algorithm can be terminated with a $\false$ verdict;
    \item $\mathtt{timeout}$: This occurs when the algorithm exceeds its allocated time budget without reaching either of the above conclusive verdicts.
\end{compactitem}
The three termination conditions correspond to the three cases of return in Line~\ref{line:returnGreedy}.

\subsection{\toolb: Simulated-Annealing-Style Exploration of \bab Tree}\label{sec:balanced}

\begin{algorithm}[!tb]
\caption{\toolb: The proposed simulated annealing algorithm}
\label{alg:sa_nn_verify}
\begin{algorithmic}[1] 
\Require A neural network $\originNetwork$, input and output specification $\Phi$ and $\Psi$, an approximated verifier $\verifier(\cdot)$, a ReLU selection heuristic $\reluHeuristic(\cdot)$, and a hyperparameter $\lambda$.
\Ensure A $\mathit{verdict} \in \{\true, \false, \mathtt{timeout}\}$ 

\Statex
\State $\specTree\gets \{\varepsilon\}$
\State $\langle\specDist, \ce\rangle\gets \verifier(\originNetwork, \Phi, \Psi, \varepsilon)$ \label{line:appToOriginSA}
\State $\reward(\varepsilon)\gets \seman{\varepsilon}$ \Comment{a metric for child selection}
\State $T\gets  T_{max}$  \Comment{temperature}\label{line:initTempSA}

\If{$\specDist<0$ and $\mathtt{not\;valid}(\ce)$} \label{line:enterMainLoopSA}
\While{not reach termination condition}\label{line:whileLoopSA}
\State $T \gets \alpha \cdot T$ \label{line:tempChangeSA} \Comment{$T$ is decreased by $(1-\alpha)T$ in each iteration}
\State \Call{SABaB}{$\varepsilon, \originNetwork, \Phi, \Psi$}\label{line:callSABaBRoot}
\EndWhile
\State\label{line:return}\Return
$
\begin{cases}
    \true & \text{if } \reward(\varepsilon) = -\infty \\
    \false & \text{if } \reward(\varepsilon) = +\infty \\
    \mathtt{timeout} & \text{otherwise}
\end{cases}
$
\Else
\State \label{line:originReturnSA}\Return
$
\begin{cases}
    \true & \text{if }\specDist > 0 \\
    \false & \text{if } \specDist < 0 \text{ and } \mathtt{valid}(\ce)
\end{cases}
$
\EndIf
\Statex
\Function{SABaB}{$\reluSpec, \originNetwork, \Phi, \Psi$}
\State $\reluAction_k\gets \reluHeuristic(\reluSpec)$ \label{line:reluSelectSA}
\If{$\reluSpec \cdot r_k^+ \in \mathcal{T}$} \label{line:childrenExpandedSA}
        \State $\Delta p \gets \exp\left(\frac{\min \reward(\reluSpec \cdot a) - \max \reward(\reluSpec \cdot a)}{T}\right)$ \ s.t. $a\in \{r_k^+, r_k^-\}$\label{line:acceptSA}
        \State $\reluSpec^* \gets \reluSpec \cdot a^* $ \ s.t. $ a^*\gets
        \begin{cases}
            \text{randomly choose } r_k^+ \text{ or } r_k^- & \text{if } \mathtt{rand}(0,1) < \Delta p \\
            \argmax_{a \in \{r_k^+, r_k^-\}} \reward(\reluSpec \cdot a) & \text{otherwise}
        \end{cases}$ \label{line:deltaP}
\State \Call{SABaB}{$\reluSpec^*, \originNetwork, \Phi, \Psi$}\label{line:recursiveCallSA}
\Else
\For{$a\in\{\reluActionPos_k, \reluActionNeg_k\}$}\label{line:expandChildrenSA}
    \Comment{$\specTree$ expansion via BaB.}
    \State $\langle\specDist, \ce\rangle \gets \verifier(\originNetwork, \Phi, \Psi, \reluSpec \cdot a)$ \label{line:verifierSubTreeSA}
    \Comment{apply \verifier with $\reluSpec\land a$}
    \State $\reward(\reluSpec\cdot a) \gets    \seman{\reluSpec\cdot a}$ \label{line:rewardSubtreeSA}
    \Comment{compute $\reward$}
    \State $\specTree \gets \specTree \cup \{\reluSpec\cdot a\}$ \label{line:updateTreeSA}
    \Comment{add to the tree of $\reluSpec\land a$}
    
\EndFor
\EndIf
\State $\reward(\reluSpec)\gets \argmax_{a\in\{\reluActionPos_k, \reluActionNeg_k \}}{\reward({\reluSpec\cdot a})}$\label{line:backpropagateSA}
\EndFunction
\end{algorithmic}
\end{algorithm}

The greedy strategy in~\S{}\ref{sec:greedy} exploits the \cepo order in the exploration of \bab tree, so it can efficiently move towards the sub-problems that are more likely to find counterexamples. However, as it is a greedy strategy, in the case if the \cepo order is not sufficiently precise, the verification process can be easily trapped into a local optimum, and miss the chances of finding counterexamples in other branches than the one suggested by the \cepo order.

To bridge this gap, we propose an approach that adapts the classic framework of \emph{simulated annealing}, which not only follows the \cepo order, but also takes other branches into account. As a consequence, during the verification, it strikes a balance between ``exploitation'' of the suspicious branches and ``exploration'' of the less suspicious branches. 

\paragraph*{``Hill climbing'' vs. ``Simulated annealing''} Hill climbing is a stochastic optimization technique that aims to find the optimum of a black-box function. Due to the black-box nature of the objective function, it relies on sampling in the search space and selects only the samples that optimize the objective function as the direction to move. Therefore, this is a greedy strategy, similarly to our proposed approach \toolg in~\S{}\ref{sec:greedy}. While \toolg does not need to sample the search space, it can select the sub-problem that is more promising, and thereby move towards the direction that is more likely to achieve the objective. As a consequence, they both suffer from the issue of ``local optima''. 

In the field of stochastic optimization, \emph{simulated annealing}~\cite{kirkpatrick1983optimization} is an effective approach to mitigate the issue of ``local optima''  in hill climbing. It is inspired by the process of \emph{annealing} in metallurgy, in which the temperature is initially high but slowly decreases such that the physical properties of metals can be stabilized. In simulated annealing, there is also a temperature that slowly decreases throughout the process: when the temperature is initially high, it tends to explore the search space and assigns a considerably high probability to accept a sample that is not the optimal; as temperature goes down, it converges to the optimum that is likely to be the global one, thanks to the exploration of the search space at the initial stage.

 Our proposed approach \toolb adapts simulated annealing to our problem setting, and the core idea involves that, at the initial stage, we allow more chances of exploring the branches that are less promising. Then, after we have comprehensively explored the search space and obtained the information about the suspiciousness of different branches, we tend to exploit the branches that are more suspicious, namely, that are more likely to contain counterexamples. 

 \paragraph*{Algorithm details} The \toolb algorithm is presented in Alg.~\ref{alg:sa_nn_verify}. It also starts with checking the original verification problem (Line~\ref{line:appToOriginSA}), and enters the loop of tree exploration if the original problem cannot be solved by \verifier (Line~\ref{line:enterMainLoopSA}). However, compared to \toolb, it has a notable difference about the adoption of a temperature $T$ (Line~\ref{line:initTempSA}) which is a global variable that keeps decreasing in each loop of the algorithm (Line~\ref{line:tempChangeSA}).

In the function \textsc{SABaB}, the temperature $T$ is used when selecting the nodes to proceed. 
In the case if the children of the current node $\reluSpec$ have been expanded (Line~\ref{line:childrenExpandedSA}), unlike \toolg that always prefers the most suspicious child, \toolb selects a child according to the policy adapted from the original simulated annealing:
\begin{compactitem}
    \item It first computes an acceptance probability $\Delta p$ (Line~\ref{line:acceptSA}), by which it determines whether the selection of a child that is less promising is acceptable. This probability $\Delta p$ is decided by both the difference of ``energy'' (defined by the suspiciousness difference between two children) and the temperature $T$;
    \item In the original simulated annealing, a sample that is more promising can be accepted in any case, and a sample that is less promising can be accepted with the probability $\Delta p$. In our context, we adapt this policy as follows (Line~\ref{line:deltaP}):
    \begin{compactitem}
        \item If a random value in $[0,1]$ is less than $\Delta p$, we randomly select a child from the two children of $\reluSpec$ with the same probability, despite the \cepo order over the two children;
        \item Otherwise, we select the more suspicious child following the \cepo order. 
    \end{compactitem}
\end{compactitem}
The selected child will be used as the argument for the recursive call of \textsc{SABaB}, in order to proceed towards a node whose children are not expanded. After achieving such a node $\reluSpec$, it expands the children of $\reluSpec$ (Lines~\ref{line:expandChildrenSA}-\ref{line:updateTreeSA}) by applying \verifier to each child of $\reluSpec$, recording their suspiciousness (Line~\ref{line:rewardSubtreeSA}) and updating the \bab tree (Line~\ref{line:updateTreeSA}). Lastly, it back-propagates the greater suspiciousness over the children until the root (Line~\ref{line:backpropagateSA}), similarly to \toolb.

This gradual transition from exploration to exploitation helps \toolb to avoid premature convergence while ensuring an eventual focus on the promising branches. The termination conditions remain the same as in \toolg, checking for full verification ($\reward(\varepsilon) = -\infty$), counterexample discovery ($\reward(\varepsilon) = +\infty$), or timeout. However, by pursuing a higher coverage of the sub-problem space before being greedy, \toolb increases the possibility of discovering counterexamples that might be missed by the purely greedy strategy of \toolg.

\begin{theorem}
Both \toolg and \toolb are sound and complete.
\end{theorem}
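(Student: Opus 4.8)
The plan is to reduce soundness and completeness of both \toolg and \toolb to that of \bab (Lemma~\ref{lem:bab}), by arguing that both algorithms are, in essence, \bab equipped with a different exploration order over the same tree of sub-problems. The key observation is that neither algorithm changes \emph{which} sub-problems can be generated, nor \emph{how} each sub-problem is evaluated: the ReLU selection heuristic $\reluHeuristic$ and the approximated verifier $\verifier$ are identical to those in Alg.~\ref{alg:bab}, and a sub-problem is split exactly when $\verifier$ returns a negative $\specDist$ with a spurious counterexample. The \cepo order (and, for \toolb, the temperature-controlled randomization) only dictates the sequence in which nodes of $\specTree$ are visited and expanded; it never suppresses a node that \bab would have explored, nor admits a node \bab would not.

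Concretely, I would proceed in three steps. First, \emph{soundness}: suppose \toolg (resp.\ \toolb) returns $\true$. By the termination condition this happens only when $\reward(\varepsilon) = -\infty$, which by the back-propagation rule in Line~\ref{line:backpropagateGreedy} (resp.\ Line~\ref{line:backpropagateSA}) requires every leaf of the current $\specTree$ to have $\seman{\cdot} = -\infty$, i.e.\ a positive verifier assessment. Moreover $\reward(\varepsilon) = -\infty$ forces $\specTree$ to be \emph{complete} in the sense that every non-leaf node has both children expanded (otherwise an unexpanded branch would leave $\reward$ at its finite initialized value and could not propagate $-\infty$ upward). Hence the leaves' ReLU specifications partition all sign patterns of the split ReLUs, exactly the condition invoked in the soundness argument of Lemma~\ref{lem:bab}; combined with soundness of $\verifier$, the network satisfies the specification. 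Second, \emph{completeness}: if the algorithm returns $\false$, then $\reward(\varepsilon) = +\infty$, which by the same back-propagation rules traces back to some leaf $\reluSpec$ with $\seman{\reluSpec} = +\infty$, i.e.\ $\specDist < 0$ and $\mathtt{valid}(\ce)$; since this $\ce$ was validated against $\dnn$ and $\outputSpec$, it is a genuine counterexample, so the network violates the specification. (The early-exit branches in Line~\ref{line:originReturnGreedy}/Line~\ref{line:originReturnSA} are the same checks on the root and are handled identically.) Third, \emph{termination/finiteness}: the depth of any node is bounded by $K$, the number of neurons, so $\specTree$ has at most $2^{K+1}-1$ nodes; each iteration of the main loop either expands at least one new pair of children or the tree is already complete, so after finitely many iterations either $\reward(\varepsilon) \in \{-\infty, +\infty\}$ or the time budget is hit, yielding $\mathtt{timeout}$ — and in the no-timeout regime the algorithm is guaranteed to produce a conclusive verdict that, by the first two steps, is correct.

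For \toolb one extra point must be checked: the randomization in Line~\ref{line:acceptSA}--\ref{line:deltaP} must not prevent the tree from eventually becoming complete. Here I would note that whenever \textsc{SABaB} descends to a node whose children are not yet expanded it \emph{does} expand them, and the descent always terminates because depth strictly increases along the recursion and is bounded by $K$; so every call to \textsc{SABaB} adds at least one new pair of leaves until $\specTree$ is complete, regardless of which child the acceptance test picks. Thus the finiteness argument goes through verbatim, and $\Delta p$ affects only efficiency, not correctness.

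The main obstacle is not any deep mathematics but making the reduction airtight: one has to verify carefully that the ``coverage of all ReLU sign patterns'' property — the crux of \bab's soundness — is genuinely preserved by the order-leading exploration, i.e.\ that $\reward(\varepsilon) = -\infty$ truly certifies a \emph{complete} binary expansion down every path and not merely along the greedily chosen ones. I would make this precise with a short invariant: at all times, for every node $\reluSpec \in \specTree$, $\reward(\reluSpec) = -\infty$ iff the subtree rooted at $\reluSpec$ is fully expanded and all its leaves have positive $\specDist$; this invariant is established by induction on the structure of $\specTree$ using the back-propagation line, and instantiating it at $\varepsilon$ gives exactly what the soundness step needs.
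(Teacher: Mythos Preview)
Your proposal is correct and follows essentially the same approach as the paper: both argue that \toolg and \toolb merely change the \emph{order} in which \bab tree nodes are visited while preserving the verifier, the ReLU-splitting rule, and the coverage of sub-problems, so soundness and completeness are inherited from Lemma~\ref{lem:bab}. The paper's own proof is a two-sentence sketch to this effect; your version is considerably more careful (in particular the invariant that $\reward(\varepsilon)=-\infty$ forces a fully expanded tree with all leaves certified, and the check that \toolb's randomization still guarantees progress), but the underlying idea is the same.
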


\begin{proof}
    The proofs of soundness and completeness of our approaches are similar to that of \bab in Lemma~\ref{lem:bab}, so we skip the details. Intuitively, our approaches only introduce an order of visiting different nodes in \bab tree, but hold all the conditions that are necessary for the soundness and completeness of \bab. 
\end{proof}

\section{Experimental Evaluation} \label{sec:evaluation}

\subsection{Experiment Settings}
\paragraph*{Baselines and Metrics} 
We compare the two versions of \tool, namely, \toolg and \toolb, with three state-of-the-art neural network verification tools \abcrown~\cite{zhang2018efficient,wang2021beta}, \neuralsat~\cite{duong2023dpll}, and \bab-baseline, as presented in~\S{}\ref{sec:bab}. The settings of baseline approaches are detailed below:
\begin{itemize}
    \item \bab-baseline is implemented based on the \eran framework \cite{singh2018deepz, singh2019abstract}, which employs LP-based triangle relaxation for bounding the output of ReLU functions, and utilizes DeepSplit~\cite{henriksen2021deepsplit} as the ReLU selection heuristics for selecting ReLU function to split, i.e., $\reluHeuristic$ in Alg.~\ref{alg:bab}, Alg.~\ref{alg:dfs_nn_verify}, and Alg.~\ref{alg:sa_nn_verify}. 
    \item \abcrown~\cite{zhang2018efficient,wang2021beta} is applied with its default branch-and-bound settings. Moreover, a balanced strategy kFSB from \fsb \cite{de2021improved} is selected as the ReLU selection heuristics for selecting the ReLU function to split. 
    
    \item \neuralsat~\cite{duong2023dpll} is applied with its default DPLL(T) framework and we select the stabilized optimization as the neuron stability heuristics for ReLU activation pattern search. Additionally, it is registered with a random attack~\cite{das2021fast, yu2016derivative} to reject the easily detected violation instances at the early stage of verification processes.
\end{itemize} 
For our approaches, the hyperparameters are set as follows: $\lambda = 0.5$ (see Def.~\ref{def:susp}), $T_{max} = 1$ (see Line~\ref{line:initTempSA} of Alg.~\ref{alg:sa_nn_verify}) and $\alpha = 0.99$ (see Line~\ref{line:tempChangeSA} of Alg.~\ref{alg:sa_nn_verify}); in RQ3, we study the impact of these hyperparameters on the performance of our approaches.

We apply each of the five approaches (including two proposed approaches and three baseline approaches) to each of the verification problems, and we adopt 1000 secs as our time budget, following the VNN-COMP competition~\cite{muller2022third}. If an approach manages to solve the problem (either verifies the problem or reports a real counterexample), we deem this run as a \emph{solved} verification process, and record the time cost for reaching the verification conclusion. In particular, our \toolb is a stochastic approach, namely, its performance is subject to randomness. To compare its performance with other approaches, in Table~\ref{tab:comparisonWithBaB}, Table~\ref{tab:pairwisecase} and Fig.~\ref{fig:speedup}, we only show the performance of one random run; in Fig.~\ref{fig:figure_dis_at}, we particularly study the influence of randomness to \toolb, by repeating each verification process for 5 times.

Our evaluation metrics include the number of instances solved by an approach and the time costs for each successful verification process. For comparison of two different approaches, we also compute the speedup rate, which is the ratio of the time costs of the two approaches.

Moreover, in order to understand whether our counterexample potentiality order works, we compare the performances of our tools with the baseline approach, respectively for the problems that are finally certified and the problems that are finally falsified. We further study the influence of the hyperparameters (including $\lambda$ in Def.~\ref{def:susp} and $\alpha$ that decides the change rate of temperature in Line~\ref{line:tempChangeSA} of Alg.~\ref{alg:sa_nn_verify}) to the performances of our approaches. 
In the implementation of \tool, we adopt the same approximated verifiers and ReLU selection heuristics as the \bab-baseline. The code and experimental data of \tool are available online\footnote{\url{https://github.com/DeepLearningVerification/Oliva}}.

\begin{table}[!tb]
    \centering
    \caption{Benchmark details for the evaluation of verification.}\label{tab:benchmark}
\small
    \begin{tabular}{cccccc}
    \toprule
        Model & Architecture & Dataset & \#Activations &\# Instances & \#Images\\\midrule
        $\mnist_{{{\ltwo}}}$ &  2 $\times$ 256 linear  & \mnist & 512 & 100 &70\\ 
        $\mnist_{{{\lfour}}}$ &  4 $\times$ 256 linear  & \mnist & 1024 &78 &52\\ \hline
        $\OVAL_{{\base}}$  & 2 Conv, 2 linear  & \cifar & 3172 & 173 & 53\\  
        $\OVAL_{{\wide}}$  & 2 Conv, 2 linear  & \cifar & 6244 & 196 &53\\ 
        $\OVAL_{{\deep}}$  & 4 Conv, 2 linear & \cifar & 6756 & 143 &40\\ \bottomrule
    \end{tabular}
    \label{tab:models}
\end{table}

\paragraph*{Datasets and Neural Networks}
Our experimental evaluation uses two well-known datasets: \mnist, featuring images of handwritten digits for classification, and \cifar, featuring images of various real-world objects like airplanes, cars, and animals, with networks tasked to identify each class. These datasets are standard benchmarks that have been widely used in the neural network verification community and adopted in the VNN-COMP~\cite{muller2022third}, an annual competition in the community for comparing the performances of different verification tools. We evaluate two networks trained on \mnist that have fully-connected layers only, and three neural networks trained on \cifar that have both convolutional layer and fully-connected layers, with different network architectures, following common evaluation utilized in \eran~\cite{muller2022prima,wang2021beta} and \OVAL~\cite{bunel2018unified,bunel2020branch} benchmarks. More details of these benchmarks are presented in Table~\ref{tab:benchmark}.

\begin{figure}[!tb]
    \centering
    \includegraphics[width=0.6\linewidth]{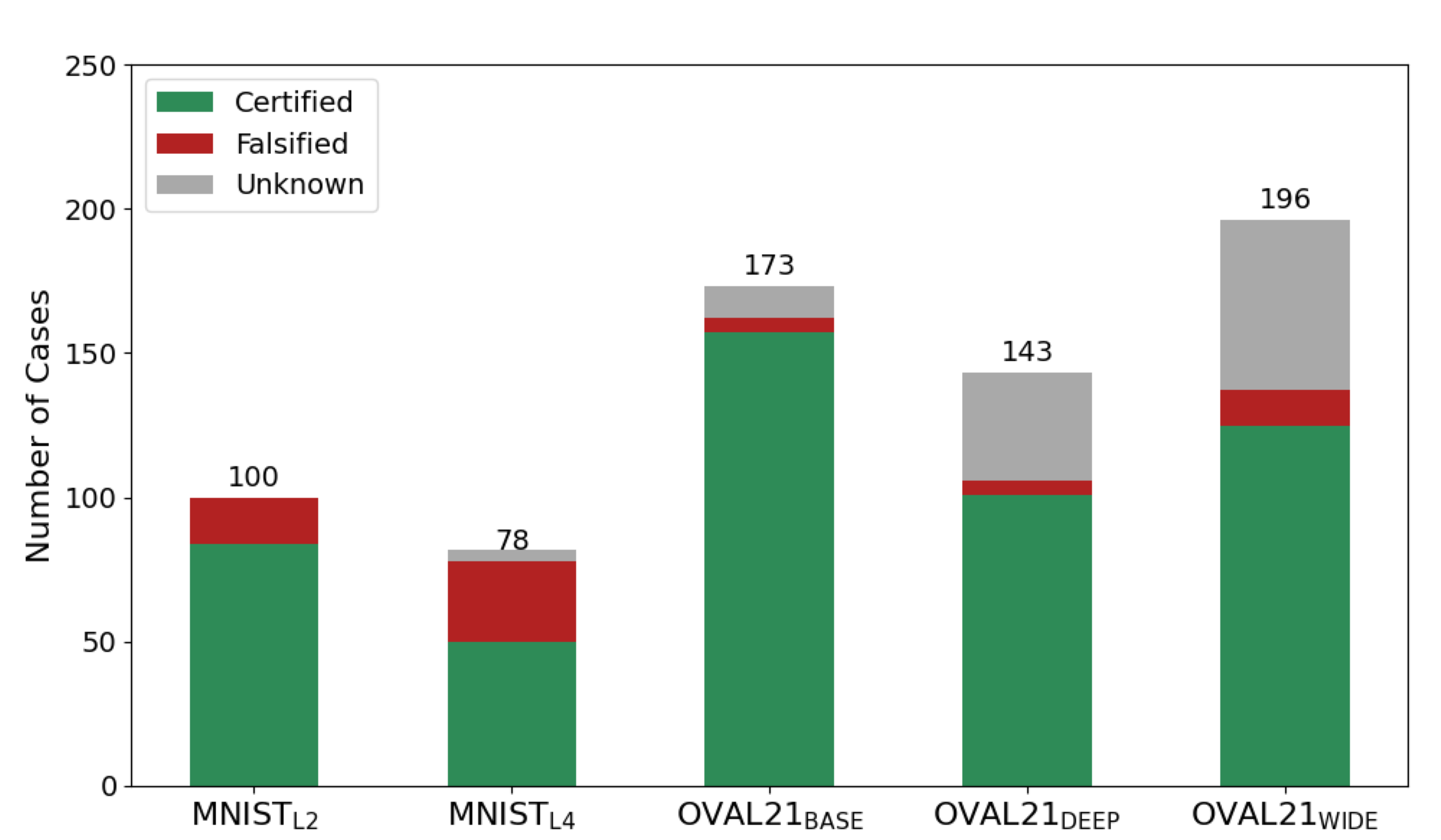}
    \caption{The distribution of Certified/Falsified/Unknown cases generated by \bab-baseline.}
    \label{fig:dis_boxplot_cond}
\end{figure}

\paragraph*{Specifications} In our experiments, all our neural network models are used for image classification tasks, and all the specifications adopted are about local robustness of the neural network models. Each input specification defines an  $\epsilon$ as the threshold for perturbation; and each output specification requires the expected label to be matched by the original input of the model. In total, we collected 690 problem instances. Table~\ref{tab:models} under the column ``\# Instances'' displays the number of instances in each model, and ``\# Images'' shows the number of different images covered by the different instances. Fig.~\ref{fig:dis_boxplot_cond} demonstrates the distribution of verification verdicts across the five models in our benchmark set, according to our experimental results. The green portion denotes tasks eventually verified (``Certified''), the red portion denotes tasks that were shown to violate the property (``Falsified''), and the gray portion corresponds to tasks that remained inconclusive (``Unknown'').

In order to present a meaningful comparison, we need to avoid the verification problems that are too simple, for which the problem can be solved within a small number of problem splitting and so there is no need to expand the \bab tree too much. To that end, we perform a selection of parameters (i.e., $\epsilon$ in Def.~\ref{def:spec}) of input specifications, from a range of $0/255$ to $16/255$ under the $L_{\infty}$ norm.  
Our approach involves a binary search-like algorithm to determine the proper perturbation values for each image, as follows:
\begin{compactenum}
    \item\label{item:1} Initially, we set $0/255$ as the lower bound $l$ and $16/255$ as the upper bound $u$, and calculates the midpoint by taking $m = \frac{l+u}{2}$; 
    \item\label{item:2} We apply \bab-baseline to the verification problem with $m$ as the $\epsilon$, and check the number of nodes in  the \bab tree. If the tree size is greater than 1, we accept it as a candidate parameter and move to the next image; otherwise, based on the results of verification, we proceed to update $m$ as like Step~\ref{item:1} and repeat Step~\ref{item:2} as follows: 
\begin{compactitem}
          \item  If the specification is violated, it implies that $\epsilon$ is too large and so we set $u$ to be $m$, and update $m$ accordingly;
          \item If the specification is satisfied, it implies that $\epsilon$ is too small and so we set $l$ to be $m$, and update $m$ accordingly.
    \end{compactitem}
\end{compactenum}
This process continues until a pre-defined budget is exhausted.

\begin{figure}[!tb]
    \centering
    \includegraphics[width=0.5\linewidth]{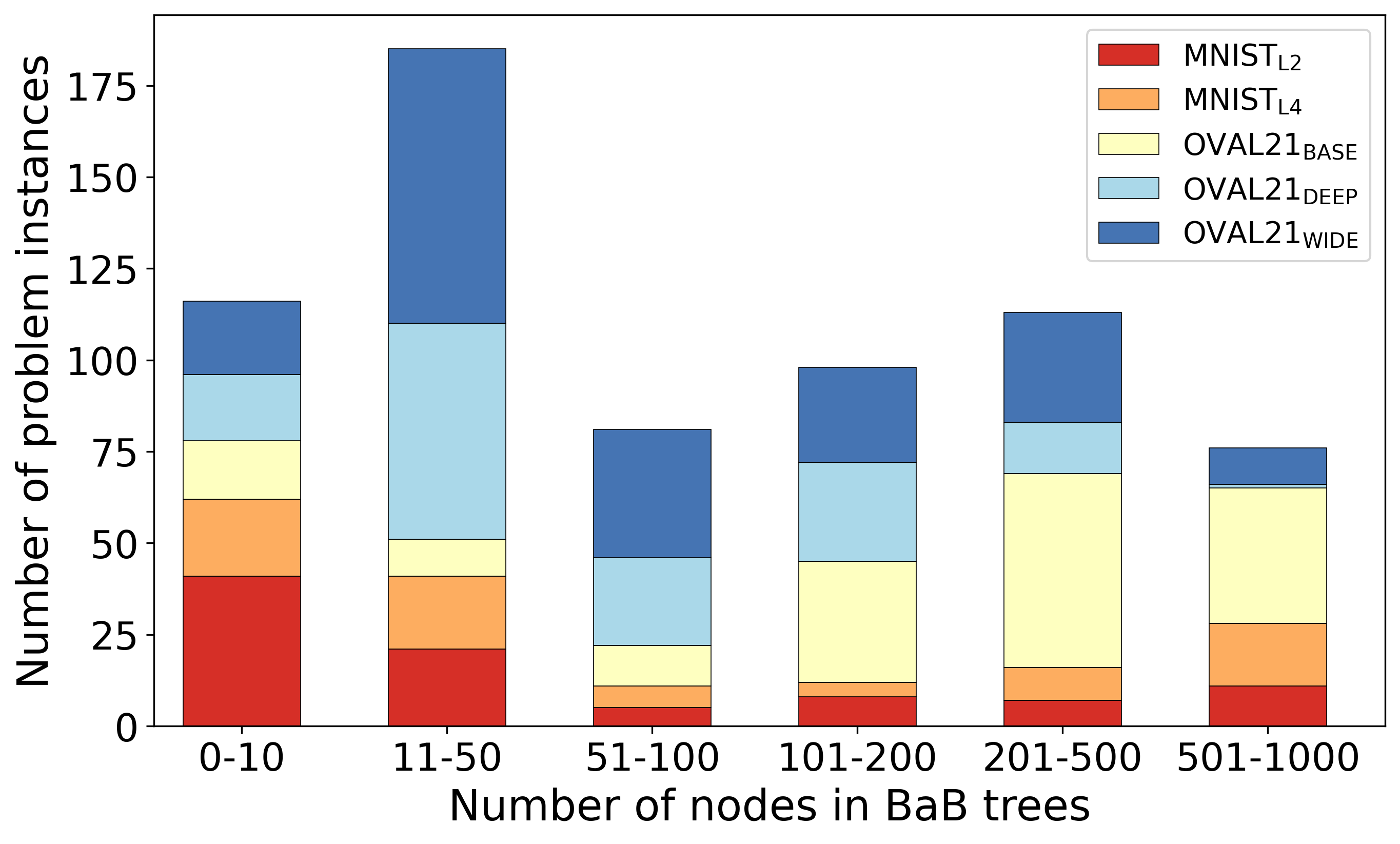}
    \caption{The distribution of the tree sizes generated by \bab-baseline.}
    \label{fig:dis_boxplot}
\end{figure}
Fig.~\ref{fig:dis_boxplot} shows the distribution of node counts across all models for the verification instances. The distribution confirms that all instances involve multiple tree nodes, ensuring meaningful sub-problem selection. Notably, more than half of the tree sizes fall within the range of 50-1000 nodes, highlighting the importance of effective sub-problem selection. The complexity of most instances necessitates careful choice in verification instances.

\paragraph*{Software and Hardware Setup}
All experiments were conducted on an AWS EC2 instance running a Linux system with 16GB of memory and an 8-core Xeon E5 2.90 GHz CPU. All tools are developed in Python 3.9, and we used the GUROBI solver 9.1.2~\cite{gurobi} for the LP-based optimization. For each verification instance, we set a time budget as 1000 seconds which is consistent with VNN-COMP~\cite{muller2022third}. 

\subsection{Evaluation}

\paragraph*{RQ1: Is \tool more efficient than existing approaches?}

In Table~\ref{tab:comparisonWithBaB}, we compare the performance of our approach with the state-of-the-art baseline approaches, \bab-baseline, \abcrown and \neuralsat, across all our verification problems. 

We observe that, \tool shows evident improvement for  benchmarks including $\OVAL_{{\base}}$, $\OVAL_{{\deep}}$ and $\OVAL_{{\wide}}$. On the $\OVAL_{{\base}}$, \toolb solves 159 instances in an average of 155.29 seconds, far surpassing \bab-baseline (42 instances in 770.7 seconds) and \abcrown (58 instances in 641.7 seconds). For $\OVAL_{\deep}$, \toolg solves 92 instances, compared to 33 by \bab-baseline and 55 by \abcrown, while using less time (250.72 seconds compared to 552.24 seconds). The difference is even more pronounced for $\OVAL_{{\wide}}$, where \toolg solves 131 instances in an average of 240.16 seconds, greatly exceeding both \bab-baseline (40 instances in 733.73 seconds) and \abcrown (63 instances in 557.51 seconds).

For $\mnist_{{\ltwo}}$ benchmarks that have simpler architectures compared to \OVAL, \bab-baseline performs well with 96 instances solved, while \toolg achieves 99 solved instances with a faster average time (57.76 seconds). As the size of the network increases to $\mnist_{{\lfour}}$, \abcrown declines to 44 instances, while \toolg achieves 67 solved instances with an average time of 146.31 seconds.
These results highlight the effectiveness and efficiency of \tool, compared to the baseline approaches.

\begin{table}[!tb]
    \centering
    
        \caption{RQ1 -- Overall comparison of different verification approaches, in terms of the number of solved problem instances and the time costs (in secs).}
    \label{tab:comparisonWithBaB}
\scalebox{0.85}{
    \begin{tabular}{l|rc|rc|rc|rc|rc }
    \toprule
        Model  &  \multicolumn{2}{c}{\bab-baseline}  & \multicolumn{2}{c}{\abcrown} & \multicolumn{2}{c}{Neuralsat} & 
        \multicolumn{2}{c}{\toolg} & \multicolumn{2}{c}{\toolb} \\ 
        
               & Solved  &Time  & Solved & Time  & Solved &Time  & Solved &Time & Solved &Time\\\hline

$\mnist_{{{\ltwo}}}$  &96&126.41 &87&\tbgreen51.32 &99&32.37 &95&96.79 &99&57.76  \\
$\mnist_{{{\lfour}}}$  &65&194.74 &44&428.03 &54&392.04 &\tbgreen67&146.31 &53&\tbgreen142.72  \\
$\OVAL_{{\base}}$ &42&770.7 &58&641.7 &70&621.21 &154&184.96 &\tbgreen159&\tbgreen155.29  \\
$\OVAL_{{\deep}}$ &33&694.74 &55&552.24 &55&539.59 &\tbgreen92&\tbgreen250.72 &87&261.68  \\
$\OVAL_{{\wide}}$ &40&733.73 &63&557.51 &65&533.01 &\tbgreen131&\tbgreen240.16 &112&288.0  \\

\hline
        \bottomrule
    \end{tabular}}

\end{table}

\begin{table}[!tb]
    \centering
    \caption{RQ1 -- Pairwise comparison on the number of additional solved problem instances from all verification tasks. The number in each cell implies the number of problem instance solved by the approach of the row, but not solved by the approach of the column.}
    \begin{tabular}{|c |c| c| c |c |c| }
    \toprule
     & \bab-Baseline & \abcrown & \neuralsat & \toolg & \toolb \\ \hline

\bab-Baseline  & 0 & 80 &59 &\tbred8& \tbred13\\ \hline
\abcrown & 111 & 0 & 11 &\tbred23& \tbred39\\ \hline
\neuralsat & 126 &47 & 0  &\tbred30& \tbred46\\\hline
\toolg & \tbgreen271 &\tbgreen255 &\tbgreen226& 0 &  40\\\hline
\toolb & \tbgreen247 &\tbgreen242 &\tbgreen213& 11 & 0  \\\hline
        
        \bottomrule
        
    \end{tabular}
    
    \label{tab:pairwisecase}
\end{table}

Table~\ref{tab:pairwisecase} presents a pairwise comparison, in which each cell indicates the number of problem instances solved by the approach listed in the row, but not solved by the approach listed in the column. Namely, we can perform pairwise comparison between two approaches by comparing the values in a pair of cells symmetric to the diagonal of Table~\ref{tab:pairwisecase}.
Compared to baseline approaches, our approaches  demonstrate superior performances, as indicated by the green area that shows the number of additional problems solved by our approaches by not by baseline approaches; in comparison, the numbers in the red area, that includes the number of problems solved by baseline approaches, but not solved by our approaches, are much less. 
By comparing the performances between \toolg and \toolb, we find that \toolg slightly outperforms \toolb, in that it solves 40 additional problems than \toolb, although there are 11 problems \toolb solves but \toolg does not do. That said, it also demonstrates the complementary strengths between different approaches, namely, there is no one global optimal approach that can solve all the problems, and so it is  worthwhile to try different approaches when one approach does not work.

In Fig.~\ref{fig:speedup}, we take a further comparison between \tool and \bab-baseline, and draw a scatter plot to show the individual instances for which \tool outperform \bab-baseline. The $x$-axis represents the time taken by the \bab-baseline method in seconds, while the y-axis shows the speedup ratio of \tool over \bab-baseline. This ratio is calculated as $\nicefrac{\tau_{\bab}(i)}{\tau_{\tool}(i)}$, where $\tau_{\bab}(i)$ and $\tau_{\tool}(i)$ denote the time taken by \bab-baseline and \tool on the instance $i$, respectively.  The blue dots and orange crosses represent all of the individual problem instances. The red line is the threshold at 1$\times$ speedup, above which the ratio distinguishes that \tool outperforms than \bab-baseline.

\begin{figure}[!tb]
    \centering
    \begin{subfigure}[b]{0.48\linewidth}
    \includegraphics[width=\linewidth]{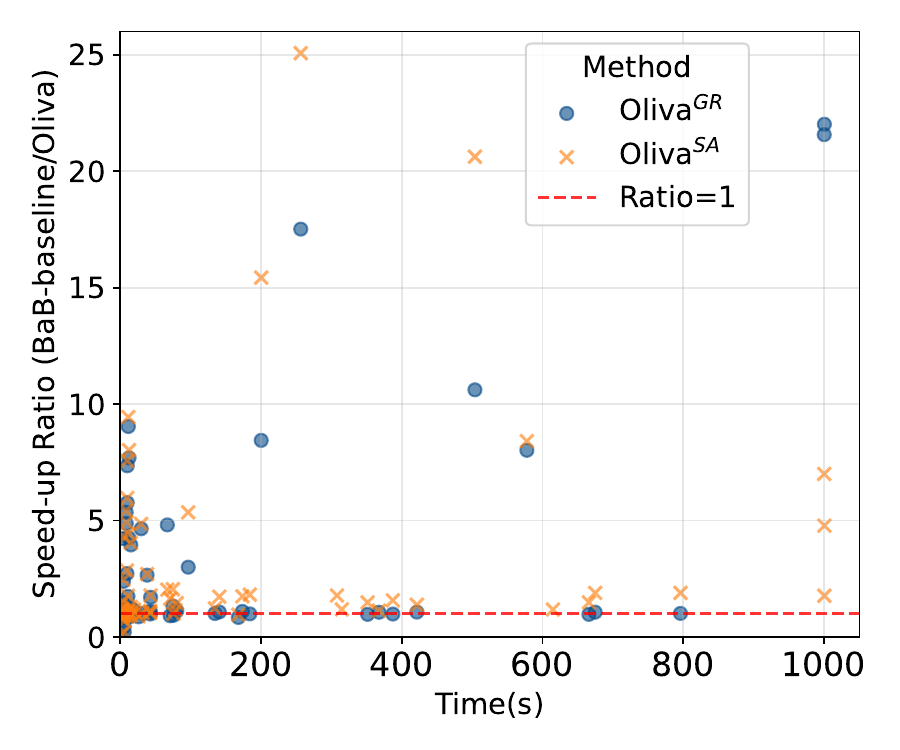}
        \caption{$\mnist_{\ltwo}$}
    \end{subfigure}
    \begin{subfigure}[b]{0.48\linewidth}
    \includegraphics[width=\linewidth]{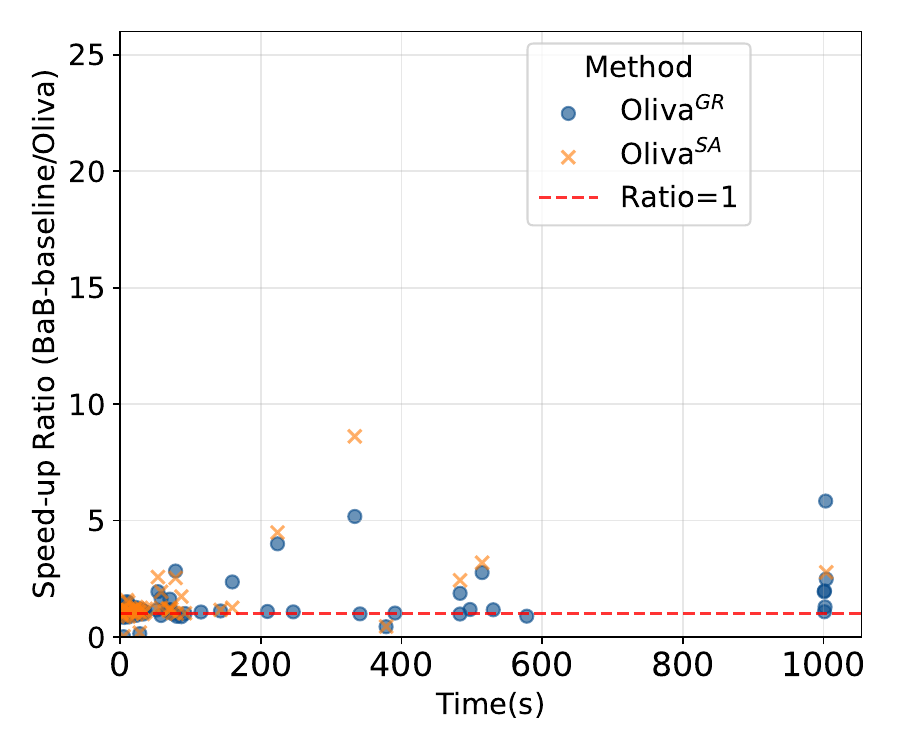}
        \caption{$\mnist_{\lfour}$}
    \end{subfigure}
    
    \begin{subfigure}[b]{0.48\linewidth}
    \centering\includegraphics[width=\linewidth]{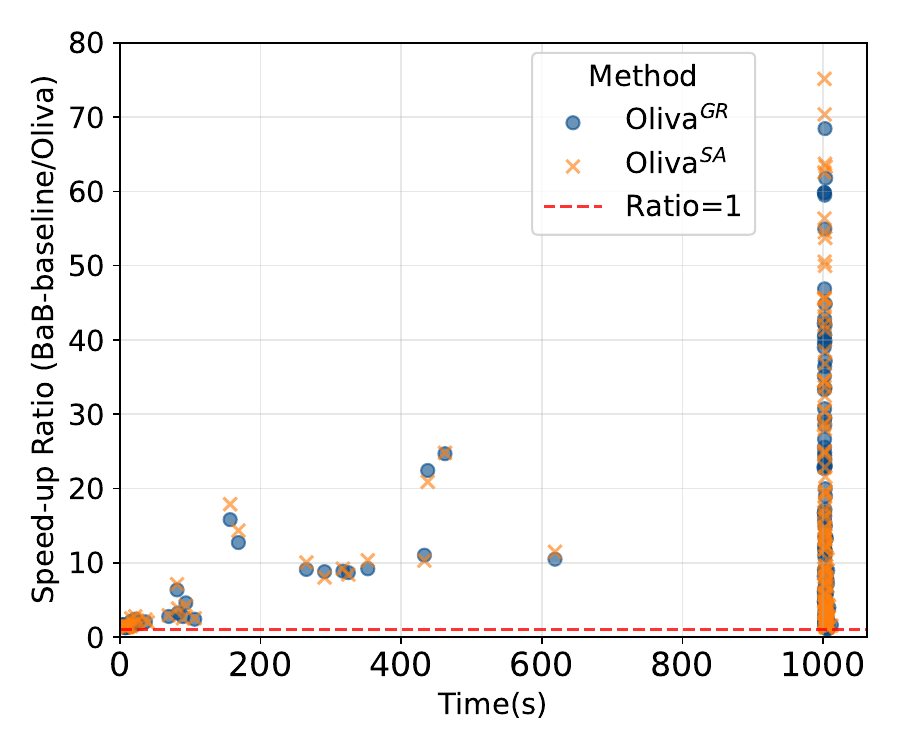}
        \caption{$\OVAL_{{\base}}$}
    \end{subfigure}
    \begin{subfigure}[b]{0.48\linewidth}
    \includegraphics[width=\linewidth]{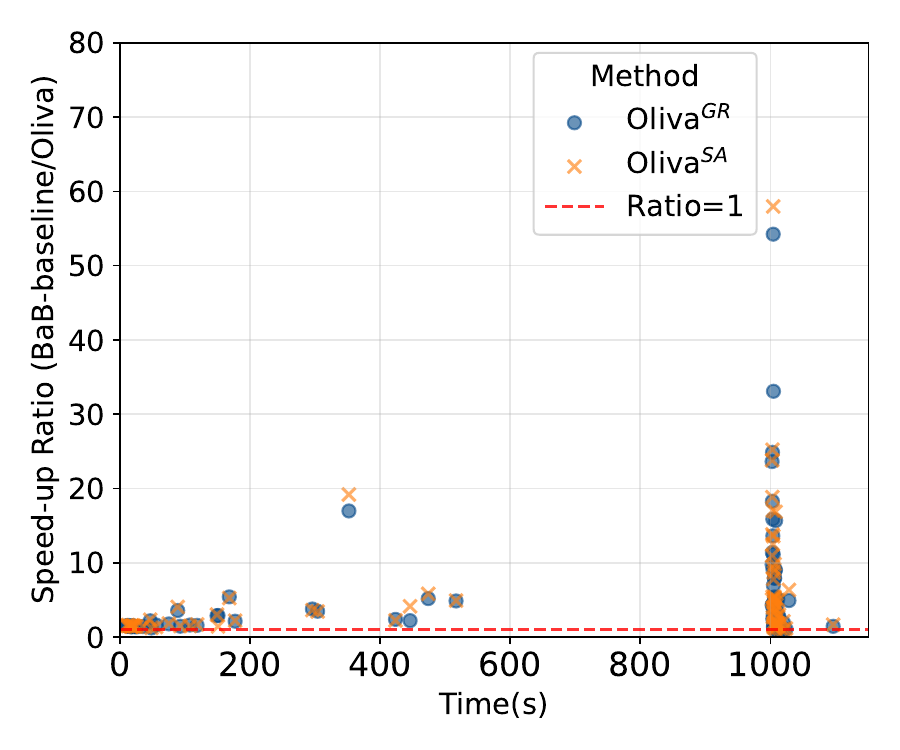}
        \caption{$\OVAL_{{\deep}}$}
    \end{subfigure}
    
    \begin{subfigure}[b]{0.48\linewidth}
        \includegraphics[width=\linewidth]{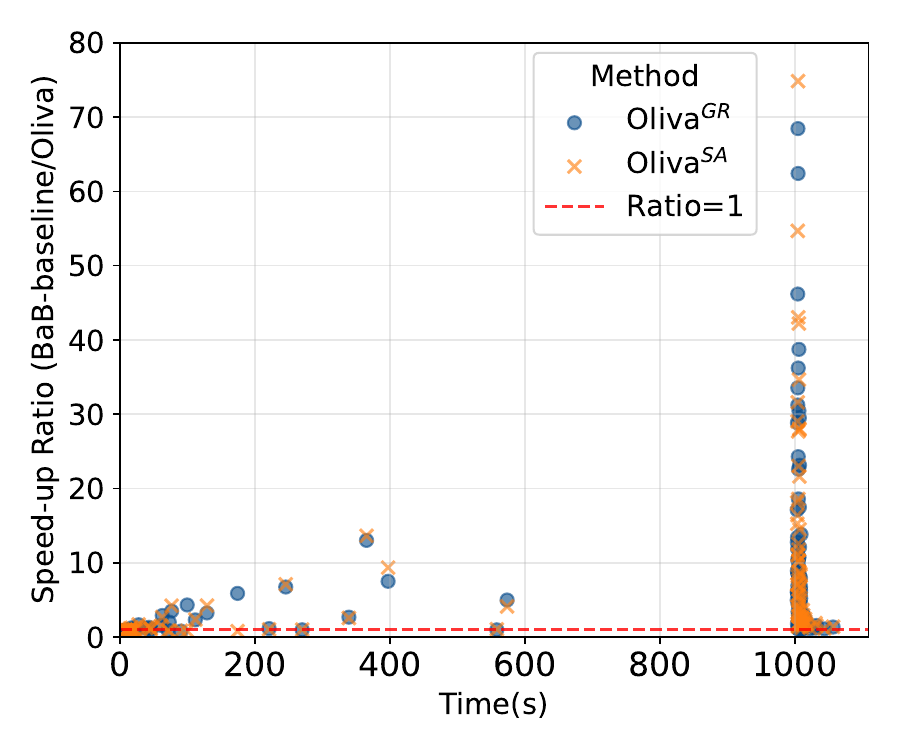}
        \caption{$\OVAL_{{\wide}}$}
    \end{subfigure}
    \caption{RQ1 -- Comparison of time cost and speedup ratio of the two variants \toolg and \toolb of our proposed approach \tool over \bab-baseline.}
    \label{fig:speedup}
\end{figure}

Across all the models, it can be seen that a significant number of blue dots (\toolg) and orange crosses (\toolb) appear above the red line, indicating that \tool outperforms the \bab-baseline. Notably, around the 1000-second mark on the $x$-axis, there are multiple instances where the speedup ratio is up to $80\times$, meaning that \tool can solve these problems more than 80 times faster than the \bab-baseline. This trend is particularly evident in \OVAL models ($\OVAL_{{\base}}$, $\OVAL_{{\deep}}$, and $\OVAL_{{\wide}}$) that have relatively complex architectures.

\begin{table}[!tb]
\caption{RQ1 -- Statistics of speedup ratios of two variants of \tool over \bab-baseline.}
    \label{tab:sspd}
    \centering
    \begin{tabular}{cccccc}
    \toprule
       Model  & Tool name & Min &Max & Median & Mean \\\midrule
        \multirow{2}{*}{Overall} & \toolg & 0.02&80.97 & 2.21& 7.27\\ 

        &  \toolb & 0.03&75.13&2.18&7.57 \\ \hline \midrule

        \multirow{2}{*}{$\mnist_\ltwo$}	& \toolg &	0.11	&22.02	&1.04&	2.49 \\

        & \toolb & 0.11	& 25.07 &	1.12&	2.50 \\\midrule

        \multirow{2}{*}{$\mnist_\lfour$} & \toolg & 0.02	& 5.84	& 1.10	& 1.38 \\ 

        & \toolb & 0.03	 &8.62	&1.17	&1.47 \\\midrule

        \multirow{2}{*}{$\OVAL_{{\base}}$} &\toolg & 1.07	&80.97	&6.93 &	13.73 \\

        & \toolb & 1.28	&75.13	&7.02	&13.96 \\\midrule

        \multirow{2}{*}{$\OVAL_{{\deep}}$} & \toolg & 1.01	& 54.23	& 2.80	&5.37 \\

        & \toolb & 1.03	& 57.96	& 2.34	&5.31\\\midrule

        \multirow{2}{*}{$\OVAL_{{\wide}}$} & \toolg & 0.81	&68.44	&2.95	&7.48  \\

        & \toolb & 0.78	&74.83&	2.55	&7.60\\\bottomrule
        
    \end{tabular}
    
\end{table}
In Table~\ref{tab:sspd}, we summarize the statistical information of speedup ratios of our approaches over \bab-baseline. Overall, the two variants of \tool achieve consistent performance improvements, with median speedups exceeding 2 times and mean gain values around 7 times.

\begin{figure}[!tb]
\centering
    \begin{subfigure}[b]{0.32\linewidth}
        \includegraphics[width=\linewidth]{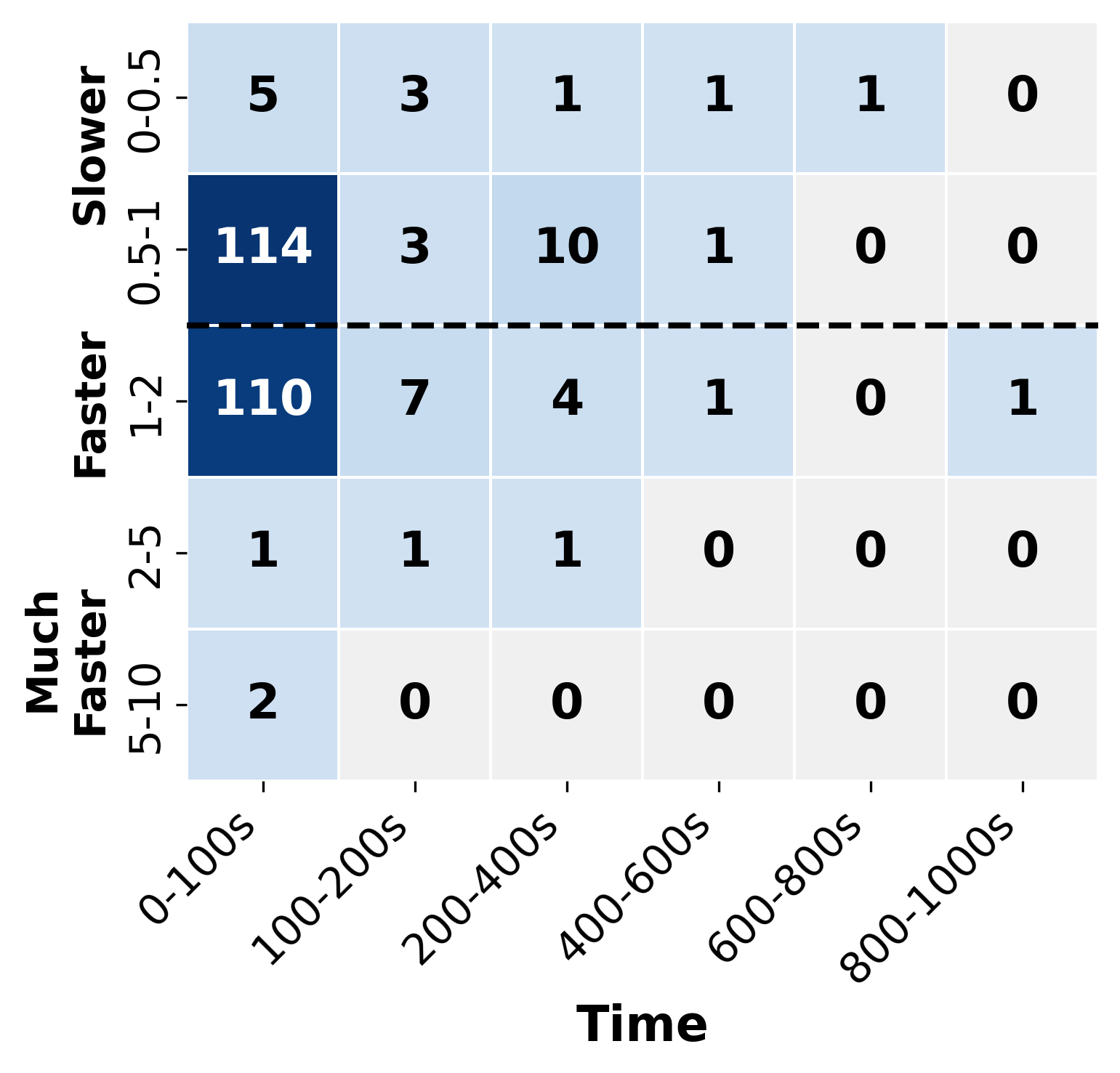}
        \caption{SA attempt 1}
        \label{fig:figure_at1}
    \end{subfigure}
    \begin{subfigure}[b]{0.32\linewidth}
        \includegraphics[width=\linewidth]{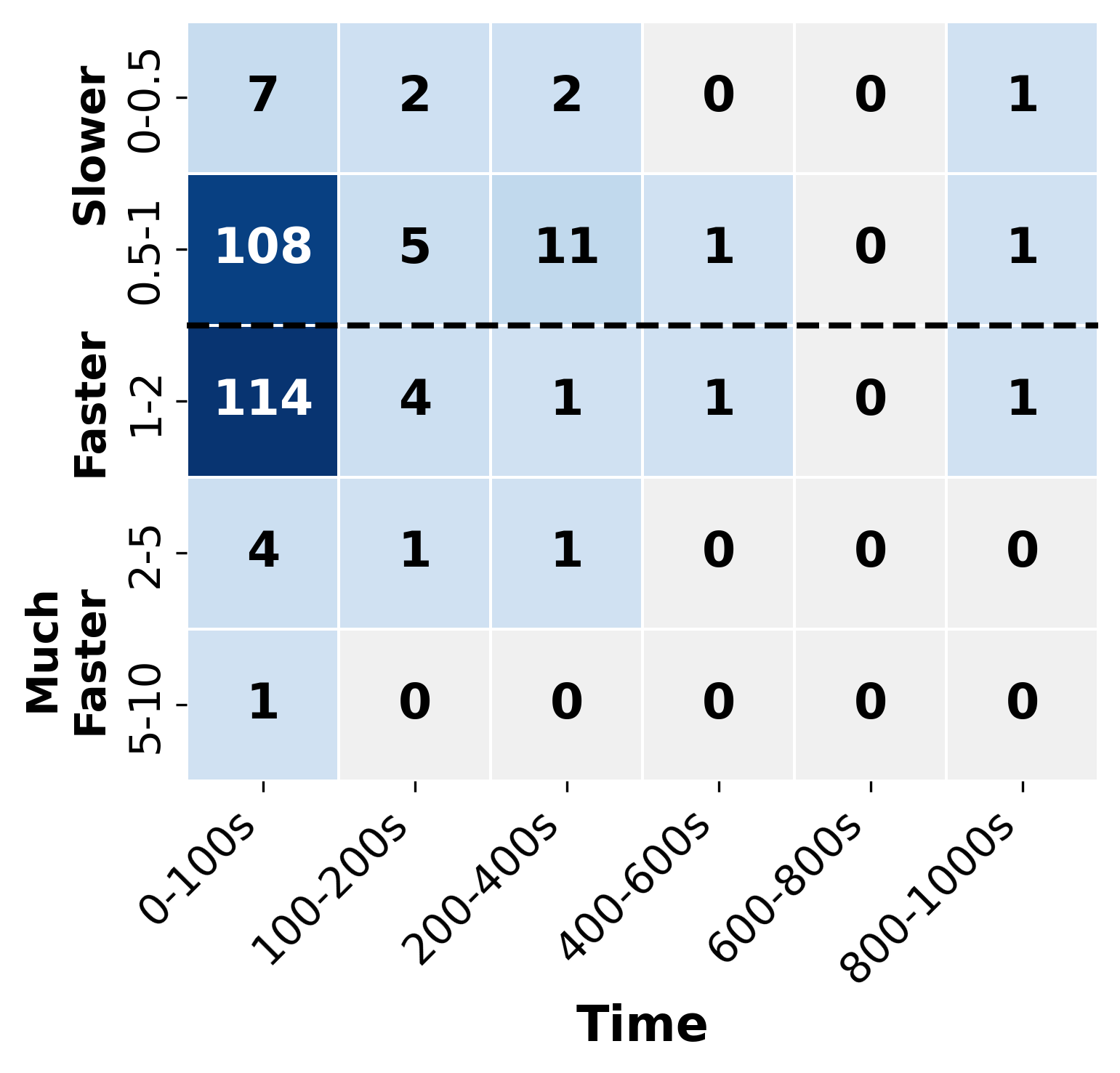}
        \caption{SA attempt 2}
        \label{fig:figure_at2}
    \end{subfigure}
    \begin{subfigure}[b]{0.32\linewidth}
        \includegraphics[width=\linewidth]{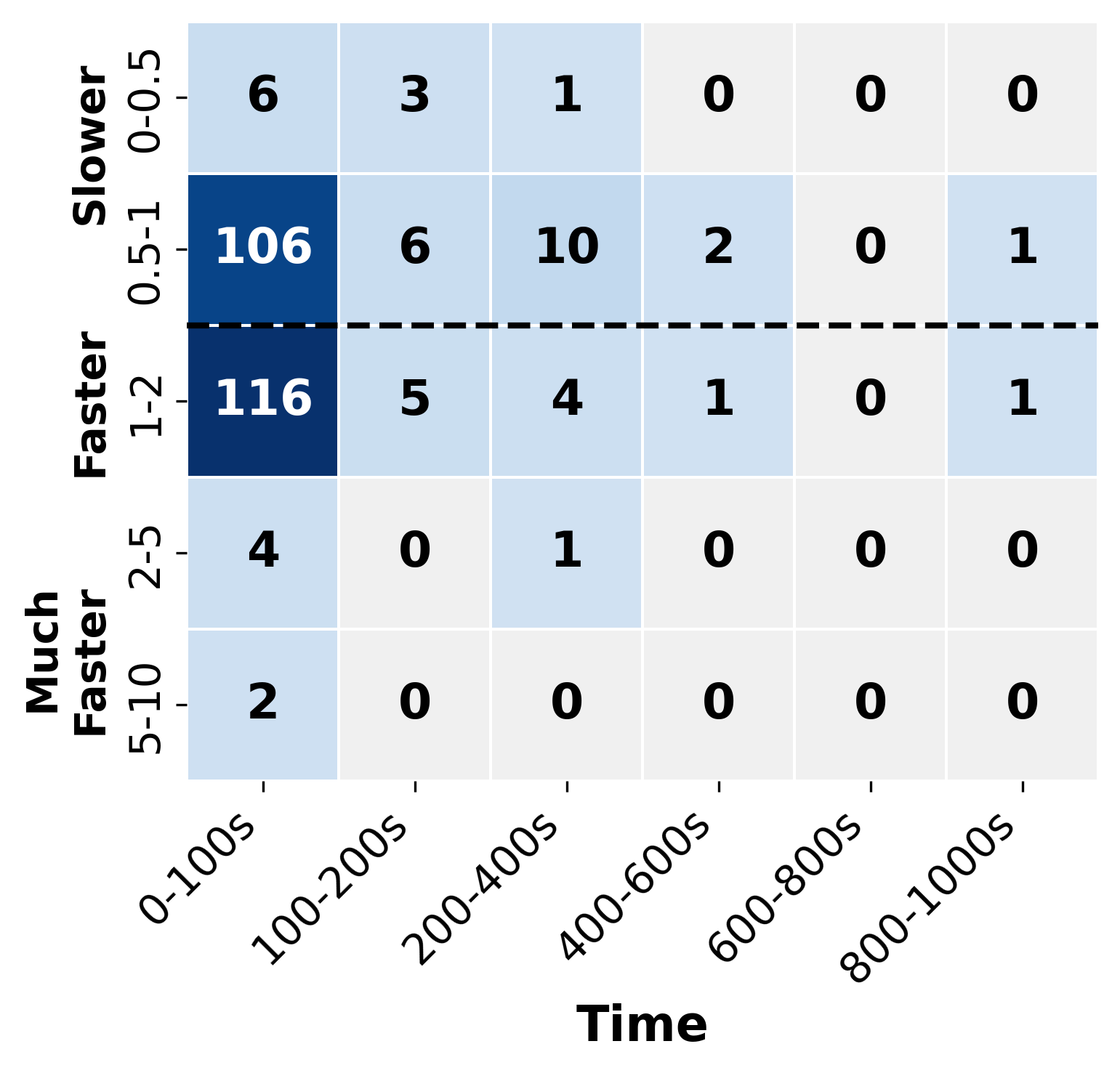}
        \caption{SA attempt 3}
        \label{fig:figure_at3}
    \end{subfigure}
    
    \begin{subfigure}[b]{0.32\linewidth}
        \includegraphics[width=\linewidth]{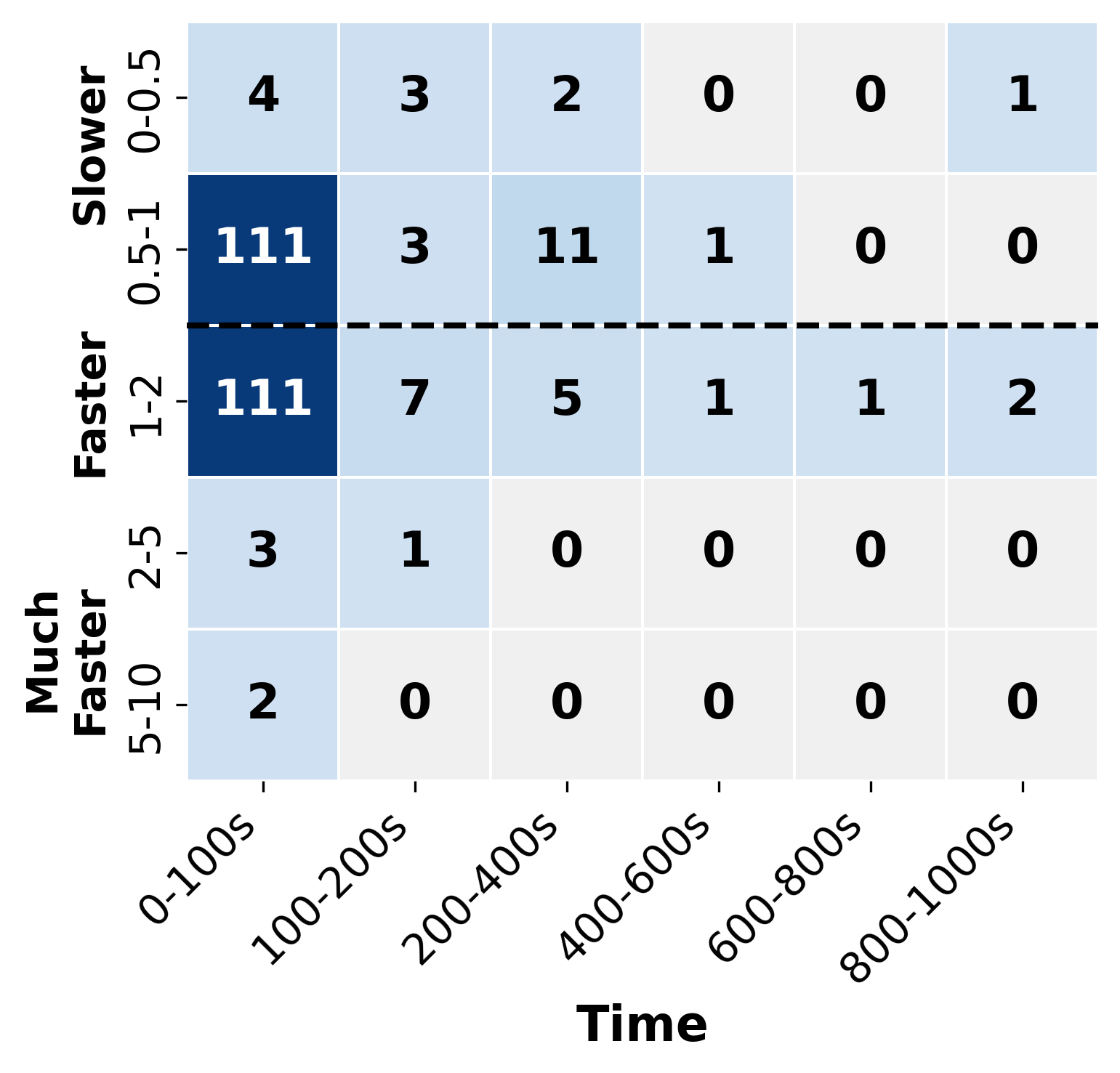}
        \caption{SA attempt 4}
        \label{fig:figure_at4}
    \end{subfigure}
    \begin{subfigure}[b]{0.32\linewidth}
        \includegraphics[width=\textwidth]{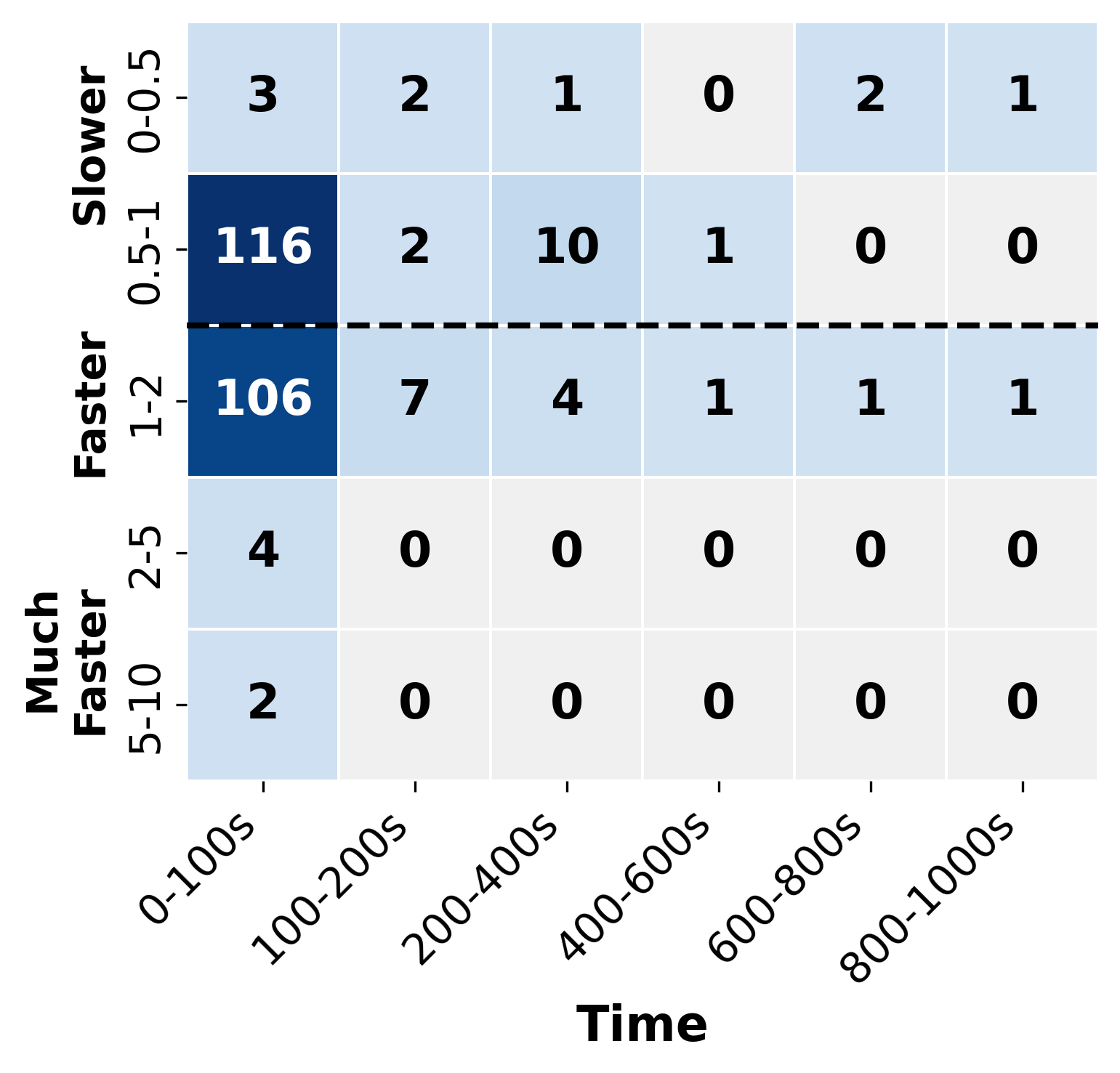}
        \caption{SA attempt 5}
        \label{fig:figure_at5}
    \end{subfigure}
\caption{The distributions of conclusive instances by the variant of \toolb, in different speedups and time costs compared with \toolg. The black dashed line indicates the performance of \toolg: above the black dashed line, \toolg outperforms \toolb; below it, \toolb performs better.}
\label{fig:figure_dis_at}
\end{figure}

Similar to simulated annealing, our tool \toolb is a stochastic approach that is subject to randomness. To analyze the impact of randomness on the performance of \toolb, we conduct five independent attempts with all the verification problems and compare them with the deterministic \toolg. Since \toolb incorporates a simulated annealing approach that makes probabilistic decisions during sub-problem exploration, different runs may explore the verification space in different orders, potentially leading to variations in performance. 

The results are presented in Fig.~\ref{fig:figure_dis_at}, in which we only present the problems that are either solved by \toolg or \toolb. Overall, the performance comparisons between \toolg and \toolb are consistent across different attempts, and are concentrated in the 0.5-2$\times$ speedup bracket. This result suggests that the randomness in the simulated annealing approach does not substantially impact the  effectiveness of \toolb. Notably, in Fig~\ref{fig:figure_at3} and Fig~\ref{fig:figure_at4}, with the nature of the randomness, \toolb exhibits the strength in finding solutions in the time cost of 100-600s that might be difficult to solve by the deterministic approach of \toolg.

\paragraph*{Comparison between \toolb and ABONN~\cite{fukuda2025adaptive}}

\begin{figure}[!tb]
    \centering
    \includegraphics[width=0.65\linewidth]{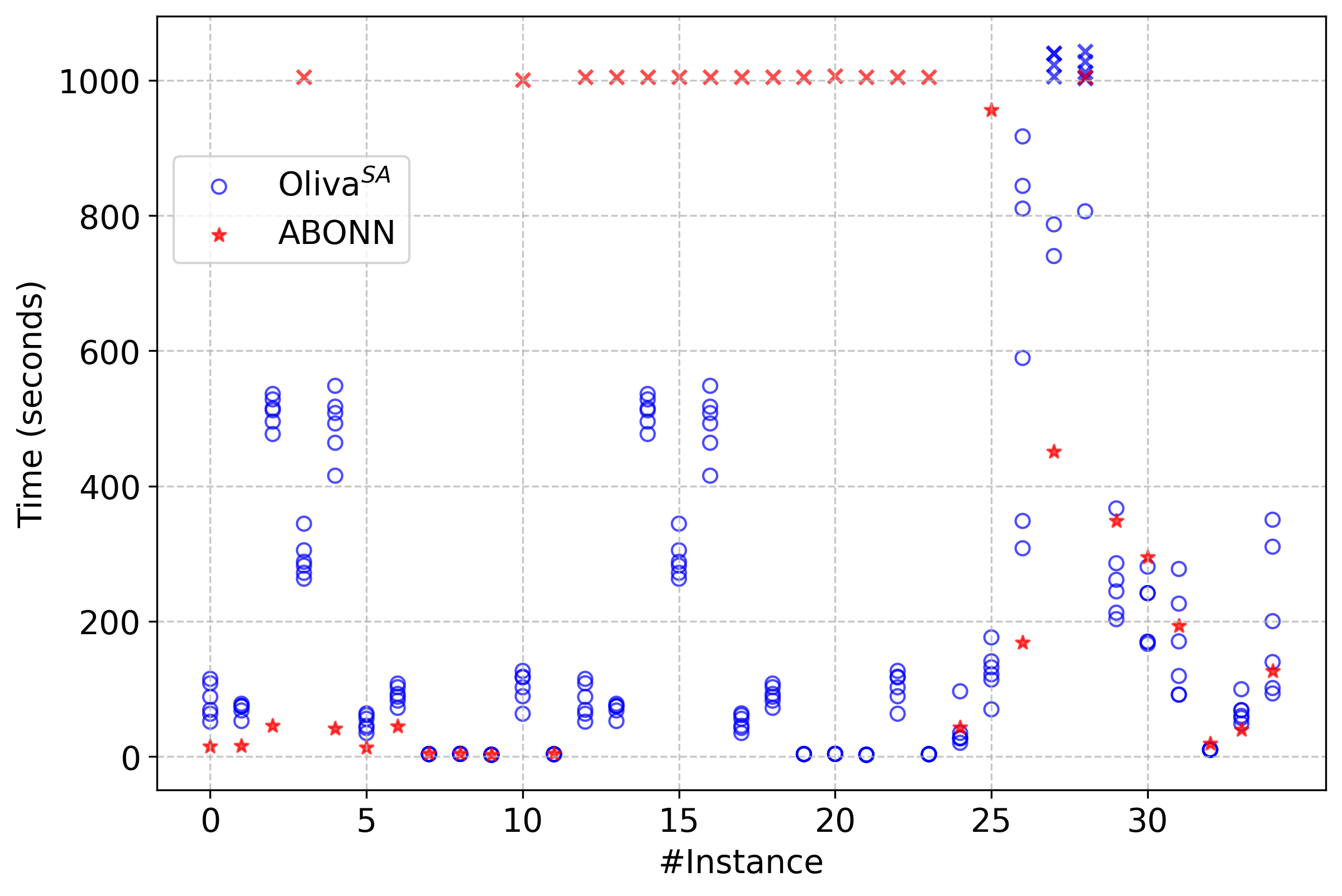}
    \caption{Comparison of deterministic (ABONN) and stochastic (\toolb) approaches. The X-axis represents instance IDs, and the Y-axis shows the corresponding verification time in seconds.}
    \label{fig:samcts}
\end{figure}

We also perform a comparison between our \toolb and MCTS-based ABONN~\cite{fukuda2025adaptive}.  Fig.~\ref{fig:samcts} summarizes the results on 34 conclusive (violated) instances.
As the problem difficulty increases (solving time greater than 300 seconds), \toolb is capable of finding the true counterexample earlier and faster, where in many cases ABONN fails to find a solution (red crosses), while multiple runs of \toolb successfully solve these problems (blue circles). 
Moreover, the stochastic nature of \toolb crucially provides repeated opportunities to find more counterexamples. 
This is particularly true for instance \#28, where we not only repeat the experiments of \toolb, but also allow ABONN to continue running beyond the 1000-second timeout. However, ABONN fails to find any counterexamples, while \toolb succeeds in doing so through repeated runs. This highlights the unique strengths of the stochastic optimisation-based \toolb compared to the deterministic ABONN approach.

\paragraph*{RQ2: How effective is \tool in handling violated and certified problem instances, respectively?}

Fig.~\ref{fig:cmp_vio_safe} shows the performance advantage of \toolg and \toolb over \bab-baseline across different verification tasks. For violated instances (i.e., confirmed counterexample by \bab-baseline and \tool), \toolg and \toolb consistently demonstrate superior efficiency, evidenced by lower median time costs and smaller interquartile ranges across all models.

\begin{figure*}[!tb]
    \centering
    \begin{subfigure}[b]{0.48\linewidth}
        \includegraphics[width=\linewidth]{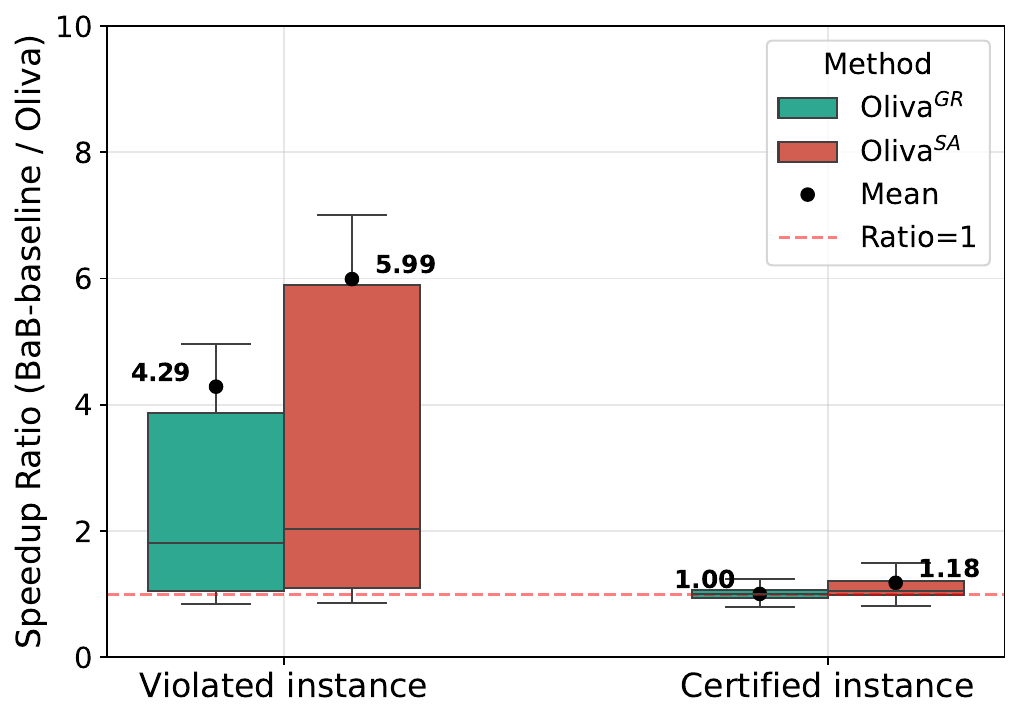}
        \caption{$\mnist_{\ltwo}$}
        \label{fig:boxmnist2}
    \end{subfigure}
    \begin{subfigure}[b]{0.48\linewidth}
    \includegraphics[width=\linewidth]{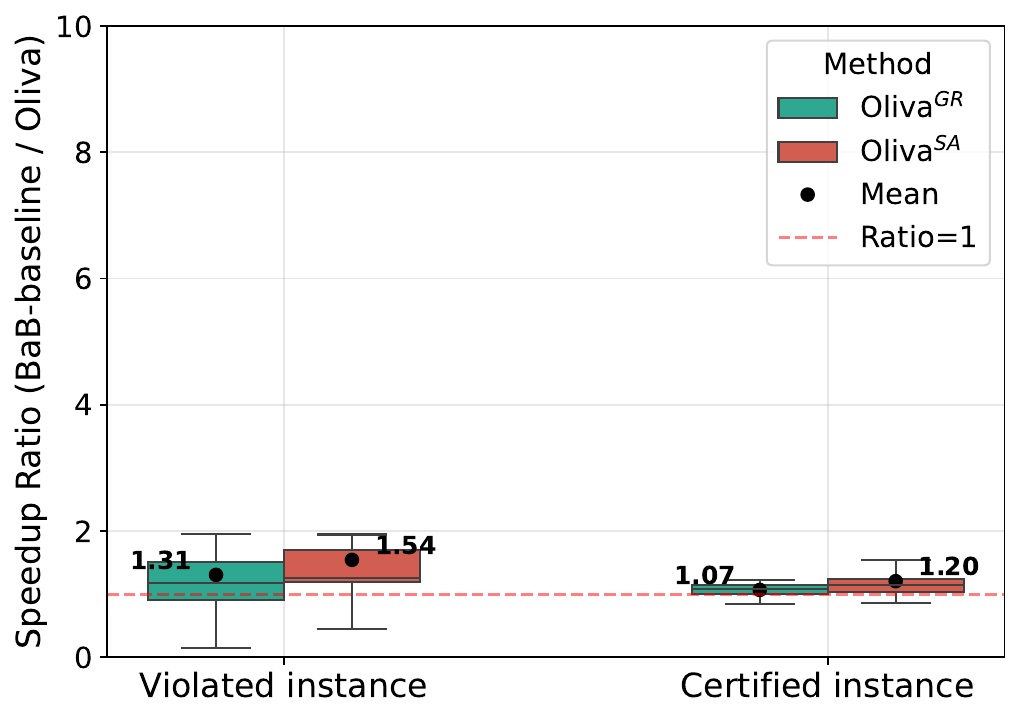}
        \caption{$\mnist_{\lfour}$}
        \label{fig:boxmnist4}
    \end{subfigure}
    
    \begin{subfigure}[b]{0.48\linewidth}
        \includegraphics[width=\linewidth]{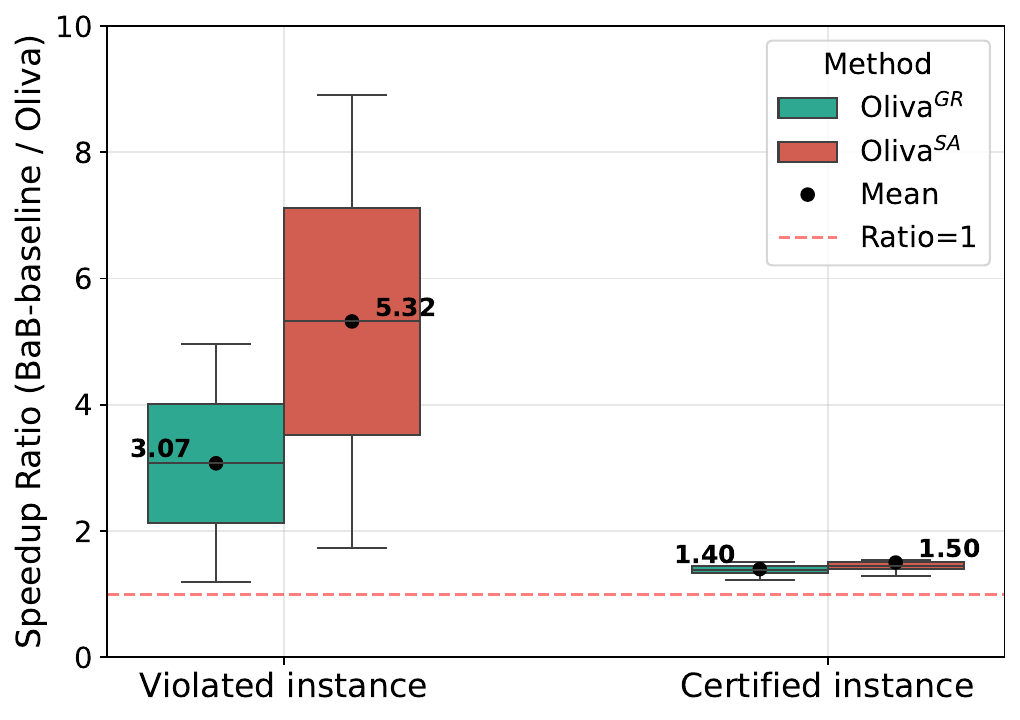}
        \caption{$\OVAL_{{\base}}$}
        \label{fig:sub_base}
    \end{subfigure}
    \begin{subfigure}[b]{0.48\linewidth}
        \includegraphics[width=\linewidth]{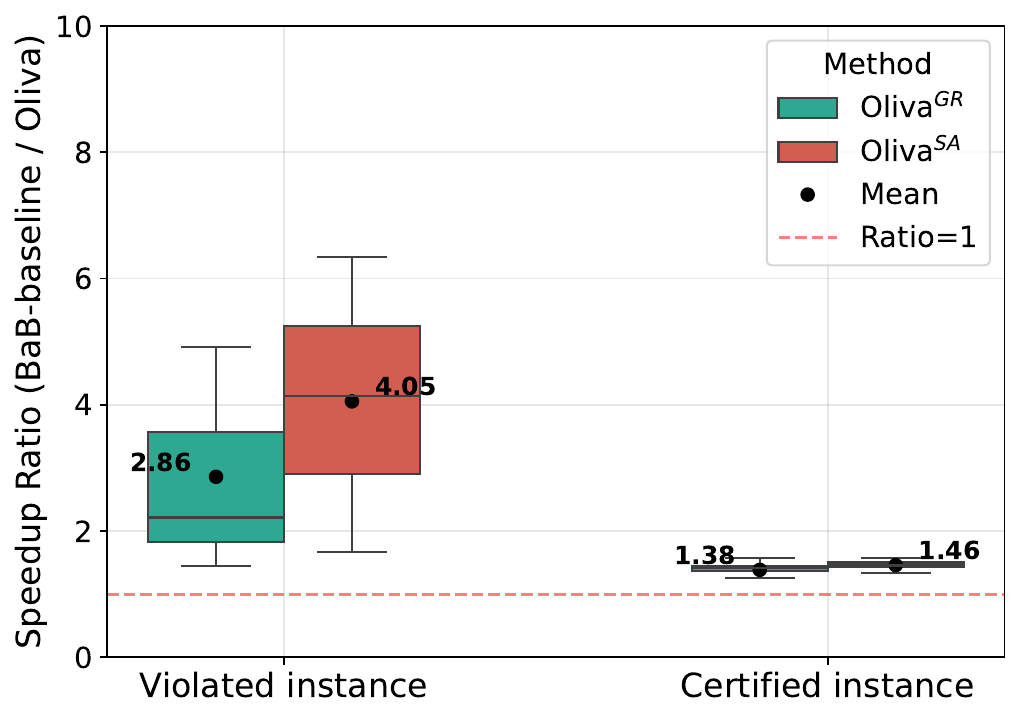}
        \caption{$\OVAL_{{\deep}}$}
        \label{fig:sub_deep}
    \end{subfigure}
    
    \begin{subfigure}[b]{0.48\linewidth}
        \includegraphics[width=\linewidth]{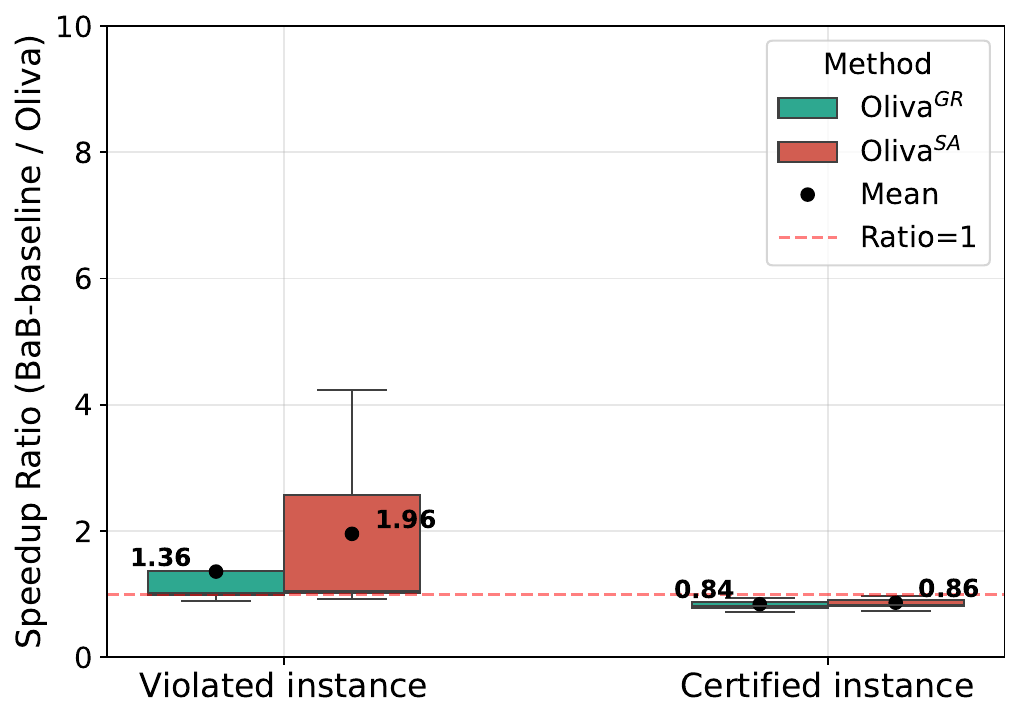}
        \caption{$\OVAL_{{\wide}}$}
        \label{fig:sub_wide}
    \end{subfigure}
    \caption{RQ2 -- Comparison between \bab-baseline and \tool for violated and certified verification problem instances.}
    \label{fig:cmp_vio_safe}
\end{figure*}

In Fig.~\ref{fig:boxmnist2}, \tool shows notably lower median time costs and compressed interquartile ranges compared to \bab-baseline, indicating more consistent and faster execution. In $\OVAL_{{\base}}$ and $\OVAL_{{\deep}}$ (Fig.~\ref{fig:sub_base} and Fig.~\ref{fig:sub_deep}), \toolg and \toolb sub-problem selection strategy is highly effective in problem instances where counterexamples exist. 
For $\OVAL_{{\base}}$, \toolg achieves a median speed-up ratio of approximately 2.5$\times$ for violated instances, and \toolb with reaching up to 4$\times$ faster execution than \bab-baseline. Similarly, for $\OVAL_{{\deep}}$, the median speed-up ratio is around 3$\times$, with peak performance showing up to 5$\times$ acceleration.

In contrast, for certified instances (i.e., confirmed robust), two variants of \tool are generally on par with the \bab-baseline, indicating that in these cases, the performance of \tool is comparable with \bab-baseline, which does not introduce overhead costs. This is expected, because in \bab, most of the time consumption is devoted to the problem solving process for each sub-problem. While our proposed approach introduces little overhead in the selection of the sub-problems to proceed with, the overhead is almost invisible. 

\paragraph*{RQ3: How does the hyperparameter $\lambda$ and $\alpha$ in the order influences the performance of \tool?}

\begin{figure}[!tb]
    \centering
    \includegraphics[width=0.8\linewidth]{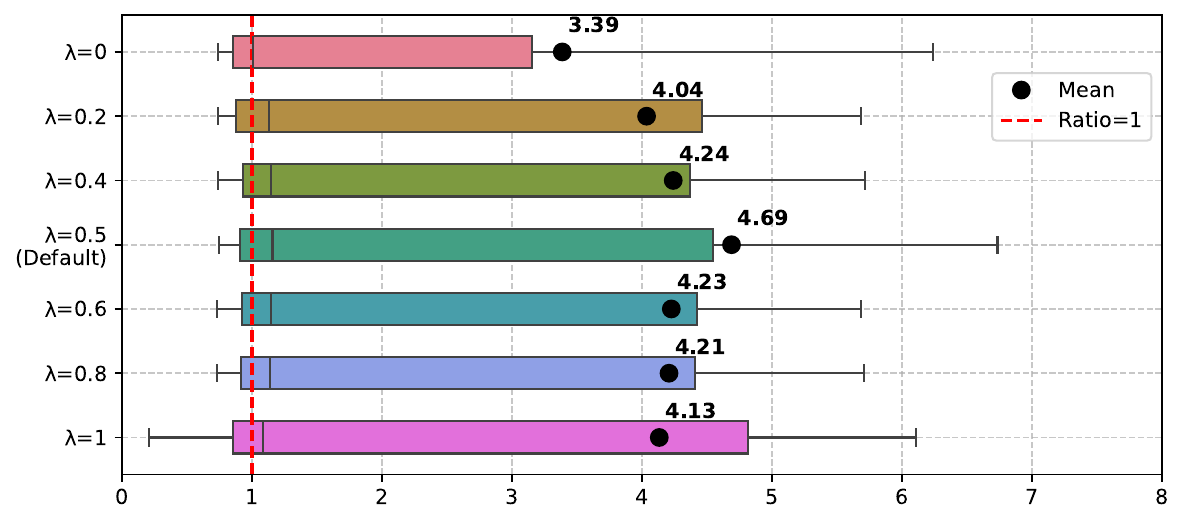}
    \caption{RQ3 -- Speedup over \bab-baseline under different hyperparameter $\lambda$ values on verification tasks with $\OVAL_{\wide}$.}
    \label{fig:heatmap}
\end{figure}

Fig.~\ref{fig:heatmap} highlights the impact of the $\lambda$ parameter on the speedup of \toolg over \bab-baseline for the verification task on $\OVAL_{\wide}$ model. In this experiment, we randomly select 20\% of the instances, including 13 conclusive and 12 unknown cases, based on the \bab-baseline performance. The chosen instances are at varying levels of time consumption. This analysis examines $\lambda$ values ranging from $0.0$ to $1.0$, and summarizes the speedup ratios of \toolg under different  $\lambda$ values. 
The findings reveal that, first, the performances of \toolg under different $\lambda$ are relatively stable, mostly outperforming \bab-baseline. This implies that both of the attributes, including the level of problem splitting and the verifier assessment, are effective in guiding the space exploration. Moreover, the performance slightly improves as $\lambda$ increases from $0.0$ to $0.5$, peaking at a speedup of $4.69$. Beyond $\lambda = 0.5$, speedup declines, indicating $0.5$ as the optimal value. In terms of the average improvement, the speedup vary from $3.39$ to $4.69$ and is not negligible,  which emphasizes the importance of careful tuning of $\lambda$ to maximize the efficiency of \toolg in verification tasks. 

\begin{figure}[!tb]
    \centering
    \includegraphics[width=0.8\linewidth]{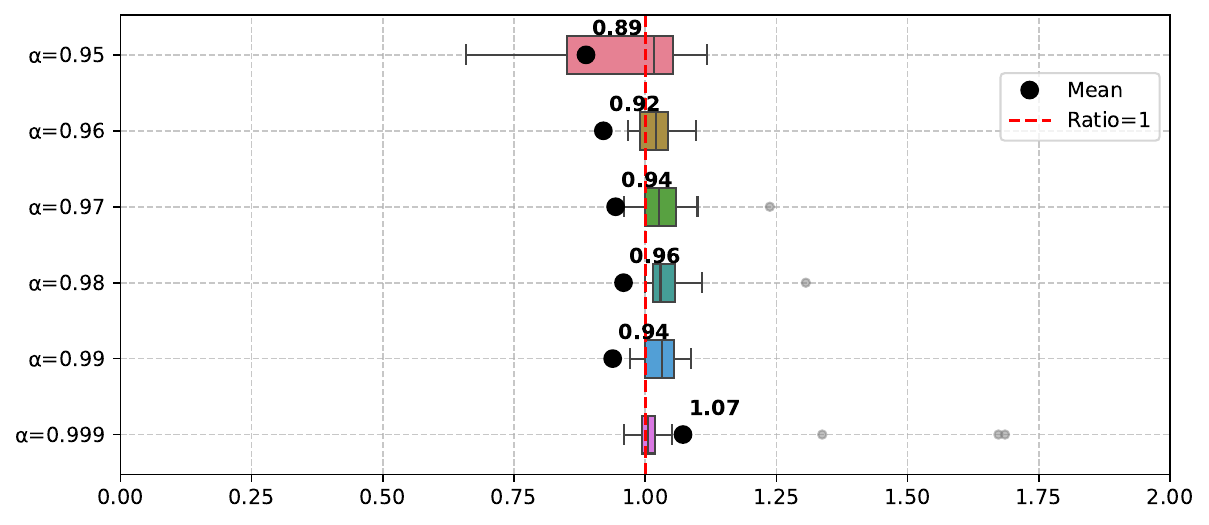}
    \caption{RQ3 -- Speedup over \toolg under hyperparameter $\alpha$ values on verification tasks with $\OVAL_{\wide}$.}
    \label{fig:heatmapalpha}
\end{figure}

Then, we settle $\lambda=0.5$ as the default value to evaluate the temperature reduction parameter $\alpha$, which ranges from 0.95 to 0.999, as suggested by~\cite{kirkpatrick1983optimization}. Fig.~\ref{fig:heatmapalpha} compares the speedup of \toolb over \toolg. 
The hyperparameter $\alpha$  determines the speed of ``temperature'' decreases during the verification processes, directly affecting the balances between exploration and exploitation. First, by the medians, we find that \toolb under all $\alpha$ outperform \toolg. Specifically, with $\alpha=0.999$, \toolb achieves better performance than \toolg in average, which is a very slow cooling schedule ($\alpha$ close to 1) that allows the algorithm to thoroughly explore the search space early on before transitioning to exploitation. As $\alpha$ decreases, the temperature drops more rapidly, giving the algorithm less time to explore before converging on promising areas. The lower performance ratios seen with $\alpha=0.95-0.98$ suggest that this faster cooling schedule may be too aggressive, causing \toolb to commit too quickly to certain branches of the verification tree without adequately exploring alternatives. By our inspection of experimental results, the relative low mean values are caused by some specific cases, for which \toolb does not perform well. Notably, when selecting $\alpha=0.99$, while its mean ratio of 0.94 might seem underwhelming at a first glance, its median performance is notably strong compared to other $\alpha$ values, representing a balanced probability that allows sufficient exploration while maintaining steady progress toward exploitation. In practical verification tasks, as consistency can be as a valuable property,  $\alpha=0.99$ can thus be preferable despite its lower mean ratio.

\section{Related Work}\label{sec:literature}
\paragraph*{Neural Network Verification} Neural network verification has been extensively studied in the past few years, giving birth to many practical approaches~\cite{tjeng2018evaluating,katz2017reluplex,ehlers2017formal,huang2017safety,singh2018deepz,singh2019abstract,singh2018boosting,muller2020neural,wang2018efficient, shi2022efficiently}.
Approximated verifiers are perferable due to their efficiency, and many works aim to seek for tighter bounds for approximation refinement~\cite{anderson1811strong,tjandraatmadja2020convex, singh2019beyond,muller2022prima,muller2021precise,xue2023kprop,ma2024relu, raghunathan2018semidefinite,fazlyab2020safety,dvijotham2020efficient,dathathri2020enabling}.
Some works aim to refine the approximation~\cite{yang2021improving,ostrovsky2022abstraction,hanumanthaiah2024iterative,duong2024harnessing, zhao2022cleverest} by exploiting information from (spurious) counterexamples (known as CEGAR approach). In contrast, our approach does not aim to refine the approximation of the backend verifiers, but we estimate the probability of finding counterexamples in different sub-problems to efficiently explore the sub-problem space.

\paragraph*{Incremental Verification}

Unlike classic verification, incremental verification is an emerging technique that aims to accelerate verification of a neural network $\originNetwork$ by exploiting the existing verification result of an architecture-same and parameter-similar neural network $\originNetwork'$. Here, the existing verification result of $\originNetwork'$ is called a \emph{template}, and different incremental verification approaches~\cite{fischer2022shared,ugare2023incremental, yang2023incremental, olive} differ in the way of defining and exploiting the template. \bab trees have been considered as templates in existing works~\cite{ugare2023incremental, olive}. Ugare et al.~\cite{ugare2023incremental} first leverage the existing verification result for $\originNetwork'$, namely, a \bab tree, to decide whether or not each node in the \bab tree for verification of $\originNetwork$ is needed to expand. Later, Zhang et al.~\cite{olive} extend this framework by attaching quantitative measurements to infer, not only \emph{whether or not}, but also \emph{how much}, each node in the \bab tree for verification of $\originNetwork$ is needed to expand. 
Although the quantitative measurement adopted in~\cite{olive} also considers counterexample potential, its purpose is fundamentally different due to a completely different problem setting, i.e., incremental verification that effectively reuses and utilises the template from $\originNetwork'$ and the BaB tree for verifying $\originNetwork$, and the ordering in~\cite{olive} is primarily used to infer the necessity of visiting different nodes in the template. Incremental verification is an orthogonal but interesting direction, raising the question of how to incorporate existing classic verification techniques designed for from-scratch settings.

\paragraph*{Problem Splitting Strategies in \bab} 

Problem splitting~\cite{shi2024neural,lu2019neural,ferrari2022complete} is an important component in the \bab workflow, which is critical to the efficiency of verification~\cite{bunel2018unified}. Early research considers splitting input space by its dimensions~\cite{wang2018formal,anderson2019optimization,rubies2019fast}; however, that can lead to an exponential growth in the number of sub-problems w.r.t. the number of dimensions, which is intractable especially for applications that have high-dimensional data, such as image classification. More recently, many works start to perform problem splitting by decomposing ReLU functions, which can achieve better performance~\cite{bunel2020branch} and scales better than input domain splitting. In these works, the ReLU selection strategy that decides the next neuron (i.e., the ReLU function) to be split is important and there emerge many effective strategies including DeepSplit~\cite{henriksen2021deepsplit} (which we adopted), BaBSR~\cite{bunel2020branch} and FSB~\cite{de2021improved}. Compared to those works, our approach is different and is orthogonal to them, because instead of selecting ReLU, we select the sub-problems to proceed with, which changes the naive breadth-first strategy of \bab in visiting the sub-problems.

\paragraph*{Sub-Problem Selection}

The idea of guiding verification towards the direction of finding counterexamples is similar to the concept of \emph{falsification}~\cite{corso2021survey} in the domain of hybrid system verification, and has been explored in a number of works in neural network verification. Guo et al.~\cite{guo2021eager} consider the image classification problem, and treat misclassification to different labels as different sub-problems; they design a metric called ``affinity'' to measure the similarity between different labels, and propose a verification approach that prioritizes the sub-problems that have higher affinity, thereby maximizing the possibility of finding counterexamples. Fukuda et al.~\cite{fukuda2025adaptive} also consider the problem of sub-problem selection in the \bab setting. They adopt a rather straightforward search technique called \emph{Monte Carlo tree search (MCTS)}, which features sub-problem selection via \emph{``trial-and-error''}. However, the MCTS approach in~\cite{fukuda2025adaptive} involves a fixed policy of tree exploration, and once the counterexample search heuristics does not work well, the performance can be diminished. In comparison, our strategy \toolb adopts stochastic optimization and can benefit from our algorithmic stochasticity, namely, even though an attempt of tree exploration is not effective, by using a different random seed, we can explore the tree in a different way, which may give the chances of finding counterexamples.

\paragraph*{Testing and Attacks} 
Testing is another effective quality assurance for neural networks, and it has also been extensively studied, such as~\cite{pei2017deepxplore, odena2019tensorfuzz,gao2020fuzz, tian2018deeptest}. 
A similar line of works are adversarial  attacks, that aim to generate adversarial examples~\cite{goodfellow2015explaining,andriushchenko2020square,xie2019improving,zhang2022branch,carlini2017towards} to fool the neural networks. While these approaches are efficient, they cannot provide rigorous guarantee on specification satisfaction, when they cannot find a counterexample, because they solve the problem by checking infinitely many single inputs in the input space. In comparison, our approach is sound, in the sense that if there does not exist a counterexample in the network, we can finally reach the certification of the network. This is because by our problem splitting, we deal with finitely many sub-problems.

\section{Conclusion and Future Work}\label{sec:conclusion}
    
In this paper, we propose an approach \tool with two variants \toolg and \toolb in order to achieve high efficiency of neural network verification. \tool introduces a severity order over the sub-problems produced by \bab. This order, called counterexample potentiality, estimates the suspiciousness of each sub-problem in the sense how likely it contains a counterexample, based on both the level of problem splitting and the assessment from approximated verifiers that signify how far a sub-problem is from being violated. By prioritizing the sub-problems with higher counterexample potentiality, \tool can efficiently reach verification conclusions---either finding a counterexample quickly or certifying the neural network without significant performance degradation. Specifically, our two variants of \tool implement different strategies: $i$) \toolg greedily exploits the most suspicious sub-problems; $ii$) \toolb, inspired by simulated annealing, strikes a balance between exploitation of the promising sub-problems and exploration of the sub-problems that are less promising. Our experimental evaluation across 5 neural network models and 690 verification problems demonstrates that \tool can achieve up to 80$\times$ speedup, compared to the state-of-the-art baseline approaches. Moreover, we also show the breakdown comparison for the certified problems and for the falsified problems, and we demonstrate that the performance advantages of the proposed approach indeed come from the strategy guided by counterexample potentiality.

To the best of our knowledge, our work is the first that exploits the power of stochastic optimization in neural network verification, by taking the sub-problems as the search space. In future, we plan to do more exploration in this direction, e.g., trying other stochastic optimization algorithms, such as genetic algorithms, and comparing the performances of different strategies, in order to deliver better verification approaches.

\bibliography{bib2doi}

\begin{thebibliography}{10}

\bibitem{anderson2019optimization}
Greg Anderson, Shankara Pailoor, Isil Dillig, and Swarat Chaudhuri.
\newblock Optimization and abstraction: a synergistic approach for analyzing neural network robustness.
\newblock In {\em Proceedings of the 40th ACM SIGPLAN conference on programming language design and implementation}, pages 731--744, 2019.
\newblock \href {https://doi.org/10.1145/3314221.3314614} {\path{doi:10.1145/3314221.3314614}}.

\bibitem{anderson1811strong}
R~Anderson, J~Huchette, C~Tjandraatmadja, and JP~Vielma.
\newblock Strong convex relaxations and mixed-integer programming formulations for trained neural networks (2018), 1811.

\bibitem{andriushchenko2020square}
Maksym Andriushchenko, Francesco Croce, Nicolas Flammarion, and Matthias Hein.
\newblock Square attack: a query-efficient black-box adversarial attack via random search.
\newblock In {\em European conference on computer vision}, pages 484--501. Springer, 2020.
\newblock \href {https://doi.org/10.1007/978-3-030-58592-1_29} {\path{doi:10.1007/978-3-030-58592-1_29}}.

\bibitem{bunel2018unified}
Rudy Bunel, Jingyue Lu, Ilker Turkaslan, Philip H.~S. Torr, Pushmeet Kohli, and M.~Pawan Kumar.
\newblock Branch and bound for piecewise linear neural network verification.
\newblock {\em CoRR}, abs/1909.06588, 2019.
\newblock URL: \url{http://arxiv.org/abs/1909.06588}, \href {https://arxiv.org/abs/1909.06588} {\path{arXiv:1909.06588}}, \href {https://doi.org/10.48550/arXiv.1909.06588} {\path{doi:10.48550/arXiv.1909.06588}}.

\bibitem{bunel2020branch}
Rudy Bunel, P~Mudigonda, Ilker Turkaslan, Philip Torr, Jingyue Lu, and Pushmeet Kohli.
\newblock Branch and bound for piecewise linear neural network verification.
\newblock {\em Journal of Machine Learning Research}, 21(2020), 2020.
\newblock URL: \url{https://jmlr.org/papers/v21/19-468.html}.

\bibitem{carlini2017towards}
Nicholas Carlini and David Wagner.
\newblock Towards evaluating the robustness of neural networks.
\newblock In {\em 2017 ieee symposium on security and privacy (sp)}, pages 39--57. Ieee, 2017.
\newblock \href {https://doi.org/10.1109/SP.2017.49} {\path{doi:10.1109/SP.2017.49}}.

\bibitem{cheng2017maximum}
Chih-Hong Cheng, Georg N{\"u}hrenberg, and Harald Ruess.
\newblock Maximum resilience of artificial neural networks.
\newblock In Deepak D'Souza and K.~Narayan~Kumar, editors, {\em Automated Technology for Verification and Analysis}, pages 251--268. Springer Int. Publishing, 2017.
\newblock \href {https://doi.org/10.1007/978-3-319-68167-2_18} {\path{doi:10.1007/978-3-319-68167-2_18}}.

\bibitem{corso2021survey}
Anthony Corso, Robert Moss, Mark Koren, Ritchie Lee, and Mykel Kochenderfer.
\newblock A survey of algorithms for black-box safety validation of cyber-physical systems.
\newblock {\em Journal of Artificial Intelligence Research}, 72:377--428, 2021.

\bibitem{das2021fast}
Moumita Das, Rajarshi Ray, Swarup~Kumar Mohalik, and Ansuman Banerjee.
\newblock Fast falsification of neural networks using property directed testing.
\newblock {\em arXiv preprint arXiv:2104.12418}, 2021.
\newblock \href {https://doi.org/10.48550/arXiv.2104.12418} {\path{doi:10.48550/arXiv.2104.12418}}.

\bibitem{dathathri2020enabling}
Sumanth Dathathri, Krishnamurthy Dvijotham, Alexey Kurakin, Aditi Raghunathan, Jonathan Uesato, Rudy~R Bunel, Shreya Shankar, Jacob Steinhardt, Ian Goodfellow, Percy~S Liang, et~al.
\newblock Enabling certification of verification-agnostic networks via memory-efficient semidefinite programming.
\newblock {\em Advances in Neural Information Processing Systems}, 33:5318--5331, 2020.

\bibitem{de2021improved}
Alessandro De~Palma, Rudy Bunel, Alban Desmaison, Krishnamurthy Dvijotham, Pushmeet Kohli, Philip~HS Torr, and M~Pawan Kumar.
\newblock Improved branch and bound for neural network verification via lagrangian decomposition.
\newblock {\em arXiv preprint arXiv:2104.06718}, 2021.

\bibitem{duong2023dpll}
Hai Duong, Linhan Li, ThanhVu Nguyen, and Matthew~B. Dwyer.
\newblock A {DPLL(T)} framework for verifying deep neural networks.
\newblock {\em CoRR}, abs/2307.10266, 2023.
\newblock \href {https://arxiv.org/abs/2307.10266} {\path{arXiv:2307.10266}}, \href {https://doi.org/10.48550/arXiv.2307.10266} {\path{doi:10.48550/arXiv.2307.10266}}.

\bibitem{duong2024harnessing}
Hai Duong, Dong Xu, ThanhVu Nguyen, and Matthew~B Dwyer.
\newblock Harnessing neuron stability to improve dnn verification.
\newblock {\em Proceedings of the ACM on Software Engineering}, 1(FSE):859--881, 2024.
\newblock \href {https://doi.org/10.1145/3643765} {\path{doi:10.1145/3643765}}.

\bibitem{dvijotham2020efficient}
Krishnamurthy~Dj Dvijotham, Robert Stanforth, Sven Gowal, Chongli Qin, Soham De, and Pushmeet Kohli.
\newblock Efficient neural network verification with exactness characterization.
\newblock In {\em Uncertainty in artificial intelligence}, pages 497--507. PMLR, 2020.

\bibitem{ehlers2017formal}
Ruediger Ehlers.
\newblock Formal verification of piece-wise linear feed-forward neural networks.
\newblock In {\em Automated Technology for Verification and Analysis: 15th Int. Symp., ATVA 2017, Proceedings 15}, pages 269--286. Springer, Oct. 2017.

\bibitem{fazlyab2020safety}
Mahyar Fazlyab, Manfred Morari, and George~J Pappas.
\newblock Safety verification and robustness analysis of neural networks via quadratic constraints and semidefinite programming.
\newblock {\em IEEE Transactions on Automatic Control}, 67(1):1--15, 2020.

\bibitem{ferrari2022complete}
Claudio Ferrari, Mark~Niklas Muller, Nikola Jovanovic, and Martin Vechev.
\newblock Complete verification via multi-neuron relaxation guided branch-and-bound.
\newblock {\em arXiv preprint arXiv:2205.00263}, 2022.
\newblock \href {https://doi.org/10.48550/arXiv.2205.00263} {\path{doi:10.48550/arXiv.2205.00263}}.

\bibitem{fischer2022shared}
Marc Fischer, Christian Sprecher, Dimitar~Iliev Dimitrov, Gagandeep Singh, and Martin Vechev.
\newblock Shared certificates for neural network verification.
\newblock In {\em Int. Conf. on Computer Aided Verification}, pages 127--148. Springer, 2022.

\bibitem{fukuda2025adaptive}
Kota Fukuda, Guanqin Zhang, Zhenya Zhang, Yulei Sui, and Jianjun Zhao.
\newblock Adaptive branch-and-bound tree exploration for neural network verification.
\newblock In {\em 2025 Design, Automation \& Test in Europe Conference (DATE)}, pages 1--7. IEEE, 2025.

\bibitem{gao2020fuzz}
Xiang Gao, Ripon~K Saha, Mukul~R Prasad, and Abhik Roychoudhury.
\newblock Fuzz testing based data augmentation to improve robustness of deep neural networks.
\newblock In {\em Proceedings of the acm/ieee 42nd international conference on software engineering}, pages 1147--1158, 2020.
\newblock \href {https://doi.org/10.1145/3377811.3380415} {\path{doi:10.1145/3377811.3380415}}.

\bibitem{goodfellow2015explaining}
Ian~J. Goodfellow, Jonathon Shlens, and Christian Szegedy.
\newblock Explaining and harnessing adversarial examples.
\newblock In {\em 3rd Int. Conf. on Learning Representations (ICLR'15)}, San Diego, CA, United States, 2015. Int. Conf. on Learning Representations, ICLR.

\bibitem{guo2021eager}
Xingwu Guo, Wenjie Wan, Zhaodi Zhang, Min Zhang, Fu~Song, and Xuejun Wen.
\newblock Eager falsification for accelerating robustness verification of deep neural networks.
\newblock In {\em 2021 IEEE 32nd Int. Symp. on Software Reliability Engineering (ISSRE)}, pages 345--356. IEEE, 2021.
\newblock \href {https://doi.org/10.1109/ISSRE52982.2021.00044} {\path{doi:10.1109/ISSRE52982.2021.00044}}.

\bibitem{gurobi}
{Gurobi Optimization, LLC}.
\newblock {Gurobi Optimizer Reference Manual}, 2023.
\newblock URL: \url{https://www.gurobi.com}.

\bibitem{hanumanthaiah2024iterative}
Karthik Hanumanthaiah and Samik Basu.
\newblock Iterative counter-example guided robustness verification for neural networks.
\newblock In {\em International Symposium on AI Verification}, pages 179--187. Springer, 2024.
\newblock \href {https://doi.org/10.1007/978-3-031-65112-0_9} {\path{doi:10.1007/978-3-031-65112-0_9}}.

\bibitem{henriksen2021deepsplit}
Patrick Henriksen and Alessio Lomuscio.
\newblock Deepsplit: An efficient splitting method for neural network verification via indirect effect analysis.
\newblock In {\em IJCAI}, pages 2549--2555, 2021.
\newblock \href {https://doi.org/10.24963/ijcai.2021/351} {\path{doi:10.24963/ijcai.2021/351}}.

\bibitem{huang2017safety}
Xiaowei Huang, Marta Kwiatkowska, Sen Wang, and Min Wu.
\newblock Safety verification of deep neural networks.
\newblock In {\em Computer Aided Verification: 29th Int. Conf., CAV 2017, Part I 30}, pages 3--29. Springer, July 2017.
\newblock \href {https://doi.org/10.1007/978-3-319-63387-9_1} {\path{doi:10.1007/978-3-319-63387-9_1}}.

\bibitem{katz2017reluplex}
Guy Katz, Clark Barrett, David~L Dill, Kyle Julian, and Mykel~J Kochenderfer.
\newblock Reluplex: An efficient {SMT} solver for verifying deep neural networks.
\newblock In Rupak Majumdar and Viktor Kun{\v{c}}ak, editors, {\em Computer Aided Verification}, pages 97--117. Springer Int. Publishing, 2017.
\newblock \href {https://doi.org/10.1007/978-3-319-63387-9_5} {\path{doi:10.1007/978-3-319-63387-9_5}}.

\bibitem{kirkpatrick1983optimization}
Scott Kirkpatrick, C~Daniel Gelatt~Jr, and Mario~P Vecchi.
\newblock Optimization by simulated annealing.
\newblock {\em science}, 220(4598):671--680, 1983.
\newblock \href {https://doi.org/10.1126/science.220.4598.671} {\path{doi:10.1126/science.220.4598.671}}.

\bibitem{liu2021algorithms}
Changliu Liu, Tomer Arnon, Christopher Lazarus, Christopher Strong, Clark Barrett, Mykel~J Kochenderfer, et~al.
\newblock Algorithms for verifying deep neural networks.
\newblock {\em Foundations and Trends{\textregistered} in Optimization}, 4(3-4):244--404, 2021.
\newblock \href {https://doi.org/10.1561/2400000035} {\path{doi:10.1561/2400000035}}.

\bibitem{lu2019neural}
Jingyue Lu and M~Pawan Kumar.
\newblock Neural network branching for neural network verification.
\newblock {\em arXiv preprint arXiv:1912.01329}, 2019.

\bibitem{ma2024relu}
Zhongkui Ma, Jiaying Li, and Guangdong Bai.
\newblock Relu hull approximation.
\newblock {\em Proceedings of the ACM on Programming Languages}, 8(POPL):2260--2287, 2024.
\newblock \href {https://doi.org/10.1145/3632917} {\path{doi:10.1145/3632917}}.

\bibitem{madry2017towards}
Aleksander Madry, Aleksandar Makelov, Ludwig Schmidt, Dimitris Tsipras, and Adrian Vladu.
\newblock Towards deep learning models resistant to adversarial attacks.
\newblock In {\em 6th Int. Conf. on Learning Representations (ICLR'18)}, Vancouver, Canada, 2018.

\bibitem{muller2020neural}
Christoph M{\"u}ller, Gagandeep Singh, Markus P{\"u}schel, and Martin~T Vechev.
\newblock Neural network robustness verification on gpus.
\newblock {\em CoRR, abs/2007.10868}, 2020.

\bibitem{muller2022third}
Mark~Niklas M{\"u}ller, Christopher Brix, Stanley Bak, Changliu Liu, and Taylor~T Johnson.
\newblock 3rd international verification of neural networks competition ({VNN-COMP} 2022): Summary and results.
\newblock {\em arXiv preprint arXiv:2212.10376}, 2022.

\bibitem{muller2021precise}
Mark~Niklas M{\"u}ller, Gleb Makarchuk, Gagandeep Singh, Markus P{\"u}schel, and Martin Vechev.
\newblock Precise multi-neuron abstractions for neural network certification.
\newblock {\em arXiv preprint arXiv:2103.03638}, 2021.

\bibitem{muller2022prima}
Mark~Niklas M{\"u}ller, Gleb Makarchuk, Gagandeep Singh, Markus P{\"u}schel, and Martin Vechev.
\newblock Prima: general and precise neural network certification via scalable convex hull approximations.
\newblock {\em ACM on Programming Languages}, 6(POPL):1--33, 2022.
\newblock \href {https://doi.org/10.1145/3498704} {\path{doi:10.1145/3498704}}.

\bibitem{odena2019tensorfuzz}
Augustus Odena, Catherine Olsson, David Andersen, and Ian Goodfellow.
\newblock Tensorfuzz: Debugging neural networks with coverage-guided fuzzing.
\newblock In {\em International Conference on Machine Learning}, pages 4901--4911. PMLR, 2019.
\newblock URL: \url{http://proceedings.mlr.press/v97/odena19a.html}.

\bibitem{ostrovsky2022abstraction}
Matan Ostrovsky, Clark Barrett, and Guy Katz.
\newblock An abstraction-refinement approach to verifying convolutional neural networks.
\newblock In {\em International Symposium on Automated Technology for Verification and Analysis}, pages 391--396. Springer, 2022.
\newblock \href {https://doi.org/10.1007/978-3-031-19992-9_25} {\path{doi:10.1007/978-3-031-19992-9_25}}.

\bibitem{pei2017deepxplore}
Kexin Pei, Yinzhi Cao, Junfeng Yang, and Suman Jana.
\newblock Deepxplore: Automated whitebox testing of deep learning systems.
\newblock In {\em 26th Symp. on Operating Systems Principles}, pages 1--18, 2017.
\newblock \href {https://doi.org/10.1145/3132747.3132785} {\path{doi:10.1145/3132747.3132785}}.

\bibitem{raghunathan2018semidefinite}
Aditi Raghunathan, Jacob Steinhardt, and Percy~S Liang.
\newblock Semidefinite relaxations for certifying robustness to adversarial examples.
\newblock {\em Advances in neural information processing systems}, 31, 2018.

\bibitem{rubies2019fast}
Vicenc Rubies-Royo, Roberto Calandra, Dusan~M Stipanovic, and Claire Tomlin.
\newblock Fast neural network verification via shadow prices.
\newblock {\em arXiv preprint arXiv:1902.07247}, 2019.

\bibitem{shi2024neural}
Zhouxing Shi, Qirui Jin, Zico Kolter, Suman Jana, Cho-Jui Hsieh, and Huan Zhang.
\newblock Neural network verification with branch-and-bound for general nonlinearities.
\newblock {\em arXiv preprint arXiv:2405.21063}, 2024.

\bibitem{shi2022efficiently}
Zhouxing Shi, Yihan Wang, Huan Zhang, J~Zico Kolter, and Cho-Jui Hsieh.
\newblock Efficiently computing local lipschitz constants of neural networks via bound propagation.
\newblock {\em Advances in Neural Information Processing Systems}, 35:2350--2364, 2022.

\bibitem{singh2019beyond}
Gagandeep Singh, Rupanshu Ganvir, Markus P{\"u}schel, and Martin Vechev.
\newblock Beyond the single neuron convex barrier for neural network certification.
\newblock {\em Advances in Neural Information Processing Systems}, 32, 2019.

\bibitem{singh2018deepz}
Gagandeep Singh, Timon Gehr, Matthew Mirman, Markus P\"{u}schel, and Martin Vechev.
\newblock Fast and effective robustness certification.
\newblock In {\em Advances in Neural Information Processing Systems}, volume~31, 2018.

\bibitem{singh2018fast}
Gagandeep Singh, Timon Gehr, Matthew Mirman, Markus P{\"u}schel, and Martin Vechev.
\newblock Fast and effective robustness certification.
\newblock {\em Advances in neural information processing systems}, 31, 2018.
\newblock URL: \url{https://proceedings.neurips.cc/paper/2018/hash/f2f446980d8e971ef3da97af089481c3-Abstract.html}.

\bibitem{singh2018boosting}
Gagandeep Singh, Timon Gehr, Markus P{\"u}schel, and Martin Vechev.
\newblock Boosting robustness certification of neural networks.
\newblock In {\em Int. Conf. on learning representations}, 2018.

\bibitem{singh2019abstract}
Gagandeep Singh, Timon Gehr, Markus P{\"u}schel, and Martin Vechev.
\newblock An abstract domain for certifying neural networks.
\newblock {\em ACM on Programming Languages}, 3(POPL):1--30, 2019.
\newblock \href {https://doi.org/10.1145/3290354} {\path{doi:10.1145/3290354}}.

\bibitem{tian2018deeptest}
Yuchi Tian, Kexin Pei, Suman Jana, and Baishakhi Ray.
\newblock Deeptest: Automated testing of deep-neural-network-driven autonomous cars.
\newblock In {\em 40th Int. Conf. on Software Engineering}, pages 303--314, 2018.
\newblock \href {https://doi.org/10.1145/3180155.3180220} {\path{doi:10.1145/3180155.3180220}}.

\bibitem{tjandraatmadja2020convex}
Christian Tjandraatmadja, Ross Anderson, Joey Huchette, Will Ma, Krunal~Kishor Patel, and Juan~Pablo Vielma.
\newblock The convex relaxation barrier, revisited: Tightened single-neuron relaxations for neural network verification.
\newblock {\em Advances in Neural Information Processing Systems}, 33:21675--21686, 2020.

\bibitem{tjeng2018evaluating}
Vincent Tjeng, Kai~Y Xiao, and Russ Tedrake.
\newblock Evaluating robustness of neural networks with mixed integer programming.
\newblock In {\em Int. Conf. on Learning Representations}, 2018.

\bibitem{ugare2023incremental}
Shubham Ugare, Debangshu Banerjee, Sasa Misailovic, and Gagandeep Singh.
\newblock Incremental verification of neural networks.
\newblock {\em ACM on Programming Languages}, 7(PLDI):1920--1945, 2023.
\newblock \href {https://doi.org/10.1145/3591299} {\path{doi:10.1145/3591299}}.

\bibitem{wang2018efficient}
Shiqi Wang, Kexin Pei, Justin Whitehouse, Junfeng Yang, and Suman Jana.
\newblock Efficient formal safety analysis of neural networks.
\newblock {\em Advances in neural information processing systems}, 31, 2018.
\newblock URL: \url{https://proceedings.neurips.cc/paper/2018/hash/2ecd2bd94734e5dd392d8678bc64cdab-Abstract.html}.

\bibitem{wang2018formal}
Shiqi Wang, Kexin Pei, Justin Whitehouse, Junfeng Yang, and Suman Jana.
\newblock Formal security analysis of neural networks using symbolic intervals.
\newblock In {\em 27th USENIX Security Symp. (USENIX Security 18)}, pages 1599--1614, 2018.
\newblock URL: \url{https://www.usenix.org/conference/usenixsecurity18/presentation/wang-shiqi}.

\bibitem{wang2021beta}
Shiqi Wang, Huan Zhang, Kaidi Xu, Xue Lin, Suman Jana, Cho-Jui Hsieh, and J~Zico Kolter.
\newblock Beta-crown: Efficient bound propagation with per-neuron split constraints for neural network robustness verification.
\newblock {\em Advances in Neural Information Processing Systems}, 34:29909--29921, 2021.
\newblock URL: \url{https://proceedings.neurips.cc/paper/2021/hash/fac7fead96dafceaf80c1daffeae82a4-Abstract.html}.

\bibitem{wu2024marabou}
Haoze Wu, Omri Isac, Aleksandar Zeljic, Teruhiro Tagomori, Matthew~L. Daggitt, Wen Kokke, Idan Refaeli, Guy Amir, Kyle Julian, Shahaf Bassan, Pei Huang, Ori Lahav, Min Wu, Min Zhang, Ekaterina Komendantskaya, Guy Katz, and Clark~W. Barrett.
\newblock Marabou 2.0: {A} versatile formal analyzer of neural networks.
\newblock In Arie Gurfinkel and Vijay Ganesh, editors, {\em Computer Aided Verification - 36th International Conference, {CAV} 2024, Montreal, QC, Canada, July 24-27, 2024, Proceedings, Part {II}}, volume 14682 of {\em Lecture Notes in Computer Science}, pages 249--264. Springer, Springer, 2024.
\newblock URL: \url{https://doi.org/10.1007/978-3-031-65630-9\_13}, \href {https://doi.org/10.1007/978-3-031-65630-9_13} {\path{doi:10.1007/978-3-031-65630-9_13}}.

\bibitem{xie2019improving}
Cihang Xie, Zhishuai Zhang, Yuyin Zhou, Song Bai, Jianyu Wang, Zhou Ren, and Alan~L Yuille.
\newblock Improving transferability of adversarial examples with input diversity.
\newblock In {\em Proceedings of the IEEE/CVF conference on computer vision and pattern recognition}, pages 2730--2739, 2019.
\newblock \href {https://doi.org/10.1109/CVPR.2019.00284} {\path{doi:10.1109/CVPR.2019.00284}}.

\bibitem{xue2023kprop}
Xiaoyong Xue, Xiyue Zhang, and Meng Sun.
\newblock kprop: Multi-neuron relaxation method for neural network robustness verification.
\newblock In {\em International Conference on Fundamentals of Software Engineering}, pages 142--156. Springer, 2023.
\newblock \href {https://doi.org/10.1007/978-3-031-42441-0_11} {\path{doi:10.1007/978-3-031-42441-0_11}}.

\bibitem{yang2023incremental}
Pengfei Yang, Zhiming Chi, Zongxin Liu, Mengyu Zhao, Cheng-Chao Huang, Shaowei Cai, and Lijun Zhang.
\newblock Incremental satisfiability modulo theory for verification of deep neural networks.
\newblock {\em arXiv preprint arXiv:2302.06455}, 2023.
\newblock \href {https://doi.org/10.48550/arXiv.2302.06455} {\path{doi:10.48550/arXiv.2302.06455}}.

\bibitem{yang2021improving}
Pengfei Yang, Renjue Li, Jianlin Li, Cheng-Chao Huang, Jingyi Wang, Jun Sun, Bai Xue, and Lijun Zhang.
\newblock Improving neural network verification through spurious region guided refinement.
\newblock In {\em Int. Conf. on Tools and Algorithms for the Construction and Analysis of Systems}, pages 389--408. Springer, 2021.
\newblock \href {https://doi.org/10.1007/978-3-030-72016-2_21} {\path{doi:10.1007/978-3-030-72016-2_21}}.

\bibitem{yu2016derivative}
Yang Yu, Hong Qian, and Yi-Qi Hu.
\newblock Derivative-free optimization via classification.
\newblock In {\em Proceedings of the AAAI Conference on Artificial Intelligence}, volume~30, 2016.
\newblock \href {https://doi.org/10.1609/aaai.v30i1.10289} {\path{doi:10.1609/aaai.v30i1.10289}}.

\bibitem{olive}
Guanqin Zhang, Zhenya Zhang, H.M.N.~Dilum Bandara, Shiping Chen, Jianjun Zhao, and Yulei Sui.
\newblock Efficient incremental verification of neural networks guided by counterexample potentiality.
\newblock {\em Proc. ACM Program. Lang.}, 9(OOPSLA1), April 2025.
\newblock \href {https://doi.org/10.1145/3720417} {\path{doi:10.1145/3720417}}.

\bibitem{zhang2022branch}
Huan Zhang, Shiqi Wang, Kaidi Xu, Yihan Wang, Suman Jana, Cho-Jui Hsieh, and Zico Kolter.
\newblock A branch and bound framework for stronger adversarial attacks of relu networks.
\newblock In {\em International Conference on Machine Learning}, pages 26591--26604. PMLR, 2022.
\newblock URL: \url{https://proceedings.mlr.press/v162/zhang22ae.html}.

\bibitem{zhang2018efficient}
Huan Zhang, Tsui-Wei Weng, Pin-Yu Chen, Cho-Jui Hsieh, and Luca Daniel.
\newblock Efficient neural network robustness certification with general activation functions.
\newblock {\em Advances in Neural Information Processing Systems}, 31, 2018.
\newblock URL: \url{https://proceedings.neurips.cc/paper/2018/hash/d04863f100d59b3eb688a11f95b0ae60-Abstract.html}.

\bibitem{zhao2022cleverest}
Zhe Zhao, Yedi Zhang, Guangke Chen, Fu~Song, Taolue Chen, and Jiaxiang Liu.
\newblock Cleverest: accelerating cegar-based neural network verification via adversarial attacks.
\newblock In {\em International Static Analysis Symposium}, pages 449--473. Springer, 2022.
\newblock \href {https://doi.org/10.1007/978-3-031-22308-2_20} {\path{doi:10.1007/978-3-031-22308-2_20}}.

\end{thebibliography}

\end{document}